\documentclass{article}

\usepackage{arxiv}

\usepackage{amsmath,amssymb,amsthm}

\newtheorem{theorem}{Theorem}
\newtheorem{lemma}{Lemma}
\newtheorem{definition}{Definition}

\newtheorem{remark}{Remark}

\usepackage[utf8]{inputenc} 
\usepackage[T1]{fontenc}    
\usepackage{hyperref}       
\usepackage{url}            
\usepackage{booktabs}       
\usepackage{amsfonts}       
\usepackage{nicefrac}       
\usepackage{microtype}      
\usepackage{cleveref}       
\usepackage{lipsum}         
\usepackage{graphicx}
\usepackage{natbib}
\usepackage{doi}

\usepackage{tikz}
\usetikzlibrary{arrows.meta, positioning, shapes.geometric, shapes.symbols, fit, backgrounds, calc}
\usepackage{adjustbox}
\usepackage{multirow}
\usepackage{subcaption}

\usepackage[ruled,vlined,linesnumbered]{algorithm2e}

\makeatletter

\renewcommand{\sectionautorefname}{\S\@gobble}
\renewcommand{\subsectionautorefname}{\S\@gobble}
\renewcommand{\subsubsectionautorefname}{\S\@gobble}
\def\appendixautorefname{\S\@gobble}%

\makeatother

\newtoggle{usecomment}
\settoggle{usecomment}{false}
\newcommand{\tl}[1]{\iftoggle{usecomment}{{\color{red}{[TL]: #1}}}{}}

\newcommand{\tool}{\textsc{ZonoGpt}}
\newcommand{\toolD}{\textsc{ZonoGpt-D}}
\newcommand{\toolA}{\textsc{ZonoGpt-A}}
\newcommand{\toolM}{\textsc{ZonoGpt-M}}
\newcommand{\toolF}{\textsc{ZonoGpt-F}}
\newcommand{\ibp}{\textsc{IBP}}
\newcommand{\deepz}{\textsc{DeepZ}}
\newcommand{\covenn}{\textsc{CoVeNN}}
\newcommand{\abcrown}{\textsc{$\alpha\beta$-CROWN}}
\newcommand{\R}{\mathbb{R}}

\newcommand{\eg}{\emph{e.g.}}

\title{\tool{}: Towards An Abstract Domain for Verifying Large GPT Models}

\author{Hai Duong \\
	Department of Computer Science \\
	George Mason University \\
        Fairfax, VA, USA \\
	\And
	Thanh Le \\
	Unaffiliated \\
	Yokosuka, Japan \\
	\And
	ThanhVu Nguyen \\
	Department of Computer Science \\
	George Mason University \\
        Fairfax, VA, USA \\
}

\date{}

\renewcommand{\undertitle}{}
\renewcommand{\shorttitle}{\tool{}: Towards An Abstract Domain for Verifying Large GPT Models}

\hypersetup{
  pdftitle={ZonoGpt: Towards An Abstract Domain for Verifying Large GPT Models},
  pdfsubject={q-bio.NC, q-bio.QM},
  pdfauthor={Hai Duong, Thanh Le, ThanhVu Nguyen},
  pdfkeywords={First keyword, Second keyword, More},
}

\begin{document}
\maketitle

\begin{abstract}
  Transformer-based models are widely used for reasoning, coding, and multimodal agentic tasks.
  To provide formal assurance of desirable behaviors, such as robustness, safety, and fairness,
  neural network verification techniques prove required properties and provide auditable guarantees before deployment.
  However, prior work remains limited to small or restricted Transformers, and maintaining precision across deep models remains challenging.
  In this work, we introduce \tool{}, an abstract domain for verifying large transformers that maintains a space complexity independent of network depth.
  \tool{} uses a structured zonotope and a generator reduction mechanism to efficiently preserve correlations.
  To maintain precision, it introduces block-specific fused transformations for Attention and LayerNorm that retain feature relations,
  along with an affine transform for GELU that preserves generator relations.
  These mechanisms enable \tool{} to be the first approach to verify standard architectures,
  scaling to official HuggingFace models up to GPT-2 Medium (24 blocks, 300M+ parameters) and successfully verifying 1,339 instances across text and vision tasks.
\end{abstract}


\section{Introduction}
  \label{sec:intro}

  Transformer-based models~\citep{vaswani2017attention}, \eg, generative pretrained transformer (GPT)~\citep{radford2019language},
  are widely used for reasoning, coding, and multimodal agentic tasks~\citep{comanici2025gemini,liu2024deepseek}.
  However, adversarial prompts can change task predictions~\citep{xu2024anllm} or bypass model safeguards~\citep{zou2023universal,chu2025jailbreakradar}.
  Furthermore, compression and quantization allow open-weight models to run on consumer hardware without cloud infrastructure~\citep{yang2025qwen3}.
  Local deployments can bypass cloud moderation and increase misuse risks~\citep{huang2024position}.

  As traditional software, empirical robustness evaluation is necessary but insufficient since it only exposes failures on tested inputs~\citep{zhang2025evaluating,joo2025harmful}.
  To provide formal assurance that these models have desirable behaviors, such as robustness~\citep{jia2019certified,zhu2024promptbench},
  safety~\citep{zou2023universal} and fairness~\citep{gallegos2024bias,wang2023decodingtrust},
  researchers have developed Neural Network Verification (NNV) techniques that prove required properties of neural networks,
  which include Transformer-based models, and provide auditable guarantees before release or deployment~\citep{shi2020robustness,bonaert2021fast,huang2026parameterized,duong2025generating}.

  While abstraction~\citep{cousot1977abstract,singh2018fast,wang2021beta} has successfully scaled NNV to large models~\citep{kaulen20256thinternationalverificationneural},
  verification remains limited to small or restricted transformers.
  Existing work handles at most three small simplified blocks~\citep{shi2020robustness,huang2026parameterized}.
  \citet{bonaert2021fast} scales to 12 blocks with much smaller configurations.
  None of these tools scale to standard transformer architectures, \eg, GPT-2~\citep{radford2019language}.
  We discuss these works in more detail in \autoref{sec:related}.


  In addition, beyond scalability, maintaining precision across deep transformers remains challenging, \eg, 24 blocks in GPT-2 Medium.
  Abstract domains inherently lose precision as they propagate through successive non-linear layers~\citep{wang2018formal,singh2019abstract},
  \eg, while \covenn{} scales via decomposition~\citep{duong2025compositional}, its interval bounds between blocks substantially degrade precision.
  Furthermore, recent techniques only tighten isolated operations, such as softmax~\citep{wei2023convex} or attention products~\citep{zhang2024galileo,huang2026parameterized},
  and does not address the over-approximation error that accumulates across layers.

  In this paper, we introduce \tool{}, an abstract domain for verifying large transformers that maintains a space complexity
  (\eg, memory footprint) independent of network depth while preserving relations between tokens and internal features.
  First, \tool{} uses structured zonotopes capturing:
  (i) global sequence correlations through shared generators
  and (ii) localized variations through local generators (\autoref{sec:method-domain}).
  Second, reduction mechanisms limit number of generators after each block (\autoref{sec:method-precision}),
  ensuring memory footprint remain bounded and independent of network depth (\autoref{sec:method-complexity}).

  To maintain precision, \tool{} introduces block-specific abstract transformations tailored to transformer components.
  Particularly, \tool{} applies a fused transform for attention residual block (\autoref{sec:fused-attn}) that jointly over-approximates the skip and attention branches.
  By preserving the shared generator structure between the two, \tool{} yields a tighter bound than bounding each branch independently.
  For the MLP residual block,
  \tool{} further improves precision by employing a fused transform for LayerNorm
  to preserve feature relations within each token (\autoref{sec:layernorm-fused}),
  along with an affine transform for GELU (\autoref{sec:gelu-bound}).

  Evaluated on 2,880 text and vision instances,
  \tool{} successfully verifies 1,339 instances (46.5\%),
  scaling to official HuggingFace models up to GPT-2 Medium (24 blocks, 300M+ parameters).
  As perturbation increases, fused attention proves critical for deep configurations (4+ blocks),
  \eg, 867 verified instances (40.1\%), while unfused variants verify fewer than 1\%.

  \noindent\textbf{Contributions.} The primary contributions of this paper include:
  (i) A structured zonotope (\autoref{sec:method-domain}) and generator reduction mechanism (\autoref{sec:method-precision})
  that preserve global and local relations with a depth-independent space complexity;
  (ii) Fused transforms for attention (\autoref{sec:fused-attn}) and LayerNorm (\autoref{sec:layernorm-fused}) that preserve feature relations,
  and an affine GELU transform (\autoref{sec:gelu-bound}) that retains input generators; 
  (iii) Verification of official HuggingFace GPT-2 architectures adapted for vision tasks, scaling to GPT-2 Medium (300M+ parameters) under full-dimensional input perturbations (\autoref{sec:evaluation}).

  While the GPT models considered by \tool{} is still very small compared to the largest ones,
  it is the first approach to verify standard architectures,
  scaling to official HuggingFace models up to GPT-2 Medium (24 blocks, 300M+ parameters) (\autoref{sec:evaluation}).
  We view this as a first step towards verifying larger GPT models,
  and we hope that our work will inspire future research in this direction.

\section{Background}
  \label{sec:background}

  \noindent\textbf{Neural Network Verification.}
    Given a neural network \(N\) and a property $\phi$, the NNV problem asks whether $\phi$ is a valid property of $N$.
    Typically, $\phi$ is a formula $\phi_{in} \Rightarrow \phi_{out}$, where $\phi_{in}$ and $\phi_{out}$ are properties over the inputs and outputs of $N$, respectively.
    An NNV tool searches for a \emph{counterexample} input to $N$ that satisfies $\phi_{in}$ but violates $\phi_{out}$: if none exists,
    $\phi$ is valid; otherwise, $\phi$ is invalid~\citep{wang2021beta,ferrari2022complete,wu2024marabou,duong2023dpll,duong2024harnessing,duong2025neuralsat,duong2025neuralsat2}.
    Modern verifiers often use branch-and-bound~\citep{bunel2020branch,duong2026verifying3},
    whose bounding step computes all reachable outputs efficiently~\citep{singh2018fast,singh2019abstract,wang2018formal,zhou2024scalable,zhou2025clipandverify}.

    Among those, zonotope~\citep{singh2018fast} is the abstraction that balances precision and scalability:
    \begin{equation}
    \mathcal Z(c,G)=\{c+G\epsilon: \epsilon\in[-1,1]^m\}
    \label{eq:background-zonotope}
    \end{equation}
    where $c$ is the center and each column of $G$ is a generator controlled by one coefficient in $\epsilon$.
    The same coefficient can influence multiple outputs, allowing a zonotope to preserve their relations.
    Zonotope transformation for affine layer
    is exact, \eg, $W\mathcal Z(c,G)+b=\mathcal Z(Wc+b,WG)$, while nonlinear operations require additional generators to bound their approximation errors.
    These generators improve precision but make the computational complexity grow with network sizes.

    \tool{} builds on zonotopes with shared generators for cross-token and local generators for token-specific relations (\autoref{sec:method-domain}).
    Generator reduction then limits their number after each block (\autoref{sec:method-precision}), preserving the relations needed by Transformer operations without growth across network depth.

  \noindent\textbf{Residual Taylor Expansion.}
    Let $A(x)=x+B(x)$ be a twice continuously differentiable residual, $Z=\{c+\sum_jH_j\epsilon_j:|\epsilon_j|\le1\}$ be its input zonotope.
    For every $x\in Z$, the change from the center is $v=x-c=\sum_jH_j\epsilon_j$.
    Define $g_v(t)=A(c+tv)$ for $t\in[0,1]$.
    Taylor's theorem along $g_v$ gives
    \begin{equation}
      A(c+v)=A(c)+J_A(c)v+e_A(c,v), \qquad e_A(c,v)=\int_0^1(1-t)g_v''(t)\,dt
      \label{eq:method-first-order-taylor}
    \end{equation}
    where $J_f(c)$ denotes the Jacobian matrix of a function $f$ evaluated at $c$.
    Since $A(x)=x+B(x)$, the identity $J_A(c)=I+J_B(c)$ turns the linear term in \autoref{eq:method-first-order-taylor} into $\sum_j(H_j+J_B(c)H_j)\epsilon_j$.
    Thus, the Taylor expansion of $A(Z)$ is:
    \begin{equation}
      A(Z)\subseteq Z'=\Big\{
      A(c)
      +\sum_j\bigl(H_j+J_B(c)H_j\bigr)\epsilon_j
      +e_A(c,v):
      |\epsilon_j|\le1
      \Big\}
      \label{eq:method-fused-construction}
    \end{equation}
    \tool{} uses \autoref{eq:method-fused-construction} to construct the output zonotope for attention (\autoref{sec:fused-attn}) and LayerNorm (\autoref{sec:layernorm-fused}).

\section{Motivating Example}
  \label{sec:example}

  \begin{figure}[t]
    \centering
    \resizebox{\linewidth}{!}{%
    \begin{tikzpicture}[
     >=Stealth,
     every node/.style={font=\small},
     tensor/.style={rectangle, rounded corners, draw, thick,
       minimum height=7mm, minimum width=9mm, fill=green!12},
     qkv/.style={rectangle, rounded corners, draw, thick,
       minimum height=6mm, minimum width=7mm, fill=black!4},
     op/.style={rectangle, rounded corners, draw, thick,
       minimum height=7mm, minimum width=11mm, fill=blue!10},
     attn/.style={rectangle, rounded corners, draw, thick,
       minimum height=7mm, minimum width=15mm, fill=orange!12},
     sumnode/.style={circle, draw, thick, minimum size=6mm, inner sep=0pt},
     elbl/.style={font=\scriptsize, above=1mm},
     spec/.style={font=\scriptsize, text width=26mm, align=center},
     shp/.style={font=\scriptsize, text=black!60},
    ]
    \node[tensor,
          label={[spec]below:$X\in\R^{2\times2}$ \\ $r=0.02$}]
          (x)    at (0,0)      {$X$};
    \node[op, label={[shp]below:$\R^{2\times2}$}]
          (ln)   at (1.8,0)    {$\mathrm{LN}$};
    \node[qkv]    (q)    at (4.6,1.2)  {$Q$};
    \node[qkv, label={[shp]below:$Q,K,V\in\R^{2\times2}$}]
          (k)    at (4.6,0)    {$K$};
    \node[qkv]    (v)    at (4.6,-1.2) {$V$};
    \node[attn, label={[shp]below:$\R^{2\times2}$}]
          (qk)   at (7.0,0)    {$QK^\top\!/\sqrt2$};
    \node[attn, label={[shp]below:$\R^{2\times2}$}]
          (sm)   at (9.8,0)    {$\mathrm{Softmax}$};
    \node[attn, label={[shp]above:$\R^{2\times2}$}]
          (pv)   at (12.2,0)   {$PV$};
    \node[op, label={[shp]below:$\R^{2\times2}$}]
          (wo)   at (14.2,0)   {$W_o$};
    \node[sumnode](sum)  at (15.7,0)   {$+$};
    \node[tensor,
          label={[spec]below:$Y\in\R^{2\times2}$}]
          (y)    at (17.2,0)   {$Y$};

    \draw[->] (x) -- (ln);
    \draw[->] (ln) -- (k);
    \draw[->] (qk) -- (sm);
    \draw[->] (sm) -- (pv);
    \draw[->] (pv) -- (wo);
    \draw[->] (wo) -- (sum) node[elbl, pos=0.75] {$b_o$};
    \draw[->] (sum) -- (y);

    \node[shp] at (2.4,0.6) {$W_{qkv},b_{qkv}$};

    \draw (3.4,1.2) -- (3.4,-1.2);
    \fill (3.4,0) circle (1.2pt);
    \draw[->] (3.4,1.2) -- (q);
    \draw[->] (3.4,-1.2) -- (v);

    \draw[->] (q) -| (qk);
    \draw[->] (k) -- (qk);
    \draw[->] (v) -| (pv);

    \draw (x.north) -- (0,1.8);
    \draw (0,1.8) -- (15.7,1.8) node[elbl, pos=0.5, below] {residual};
    \draw[->] (15.7,1.8) -- (sum.north);
    \end{tikzpicture}%
    }
    \caption{Running example for the attention residual for $S=2$ tokens, hidden size $d=2$, one head.}
    \label{fig:toy-transformer}
  \end{figure}

  We demonstrate \tool{} on an attention-residual block with $S=2$ tokens and hidden size $d=2$ (\autoref{fig:toy-transformer}).
  We verify that its first output coordinate remains positive given an $\ell_\infty$ perturbation:
  \begin{equation}
    \begin{aligned}
      \varphi&\equiv\|x-x_0\|_\infty\le r\Longrightarrow[A(x)]_{1,1}>0, \qquad
      x_0=[[0.0, 0.0], [0.7, 0.8]],\qquad r=0.02
    \end{aligned}
    \label{eq:ex-gpt-property}
  \end{equation}

  \textbf{Scalability.}
  \deepz{} represents the input with $Sd$ dense generators, each containing $Sd$ coefficients, and therefore stores $S^2d^2$ coefficients at the input.
  For the running example ($S=2$, $d=2$):
  \begin{equation}
   \begin{aligned}
   x_{11} &= x_{11}^0+r\epsilon_1+0\epsilon_2+0\epsilon_3+0\epsilon_4 &
   x_{12} &= x_{12}^0+0\epsilon_1+r\epsilon_2+0\epsilon_3+0\epsilon_4 \\
   x_{21} &= x_{21}^0+0\epsilon_1+0\epsilon_2+r\epsilon_3+0\epsilon_4 &
   x_{22} &= x_{22}^0+0\epsilon_1+0\epsilon_2+0\epsilon_3+r\epsilon_4
   \end{aligned}
   \label{eq:ex-deepz-input}
  \end{equation}
  \deepz{} stores all 16 coefficients in \autoref{eq:ex-deepz-input}, \eg, $(r,0,0,0)$ for $x_{11}$, although each generator affects only one token.
  \tool{} stores the same input without the cross-token zeros:
  \begin{equation}
   x_{11} = x_{11}^0+r\epsilon_1+0\epsilon_2 \qquad
   x_{12} = x_{12}^0+0\epsilon_1+r\epsilon_2 \qquad
   x_{21} = x_{21}^0+r\epsilon_3+0\epsilon_4 \qquad
   x_{22} = x_{22}^0+0\epsilon_3+r\epsilon_4
   \label{eq:ex-local-input}
  \end{equation}
  \tool{} stores eight coefficients (instead of the 16).
  For the GPT-2-small input with $S=512$ and $d=768$, \tool{} uses $Sd^2\cdot8/2^{30}=2.25$ GiB, while \deepz{} uses $S^2d^2\cdot8/2^{30}=1152$ GiB.
  For hidden states after each residual component, \tool{} stores generators at the source token:
  \begin{equation}
   Z_s=c_s+\sum_{j=1}^{m}G_{j,s}\epsilon_j+L_s\epsilon_s^{\mathrm{loc}}+[-b_s,b_s]
   \label{eq:ex-structured-zono}
  \end{equation}
  \tool{} uses $G$ for shared generators, $L_s$ for generators local to token $s$, and $b_s$ for the remaining independent values.
  With at most $m$ shared and $q$ local generators per token, these terms require $O(Sd(m+q+1))$ coefficients independent of network depth (\autoref{sec:method-complexity}).
  In contrast, \deepz{} adds $O(Sd)$ dense generators per block.
  Each dense generator contains $Sd$ coefficients, so one block adds $O(S^2d^2)$ coefficients and $L$ blocks add $O(LS^2d^2)$.

  \begin{figure}[t]
    \centering
    \includegraphics[width=\linewidth]{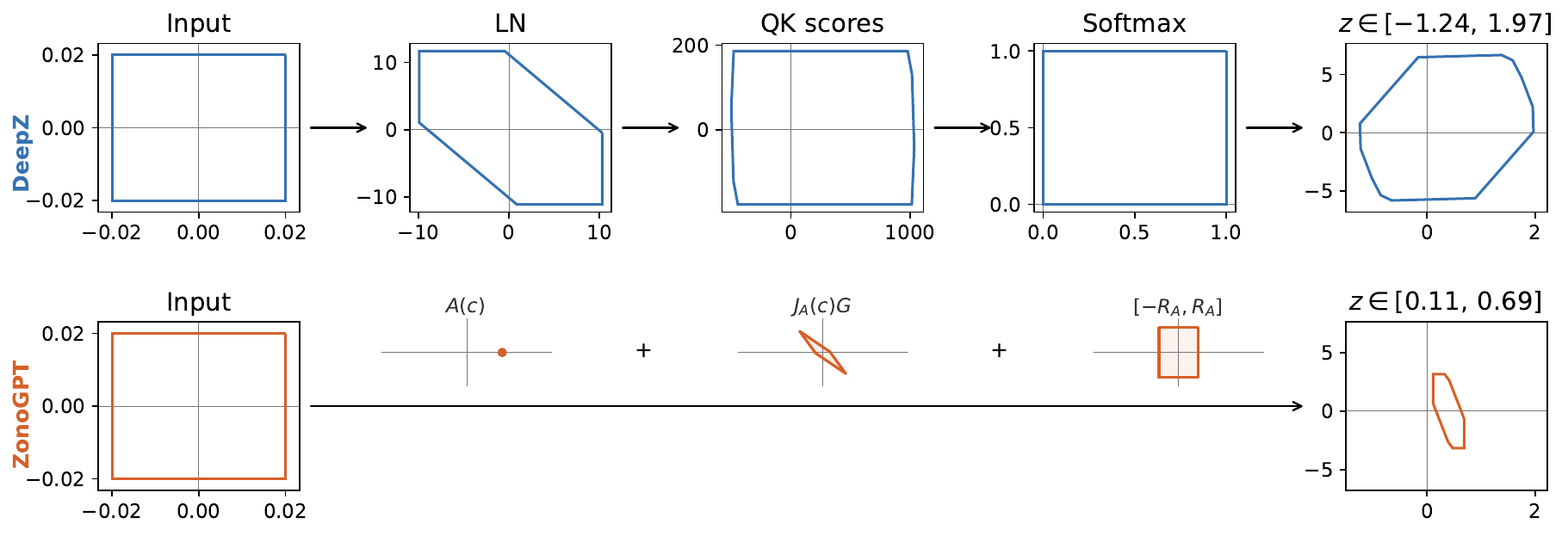}
    \caption{First-token projections for operator-wise \deepz{} (top) and fused \tool{} (bottom).}
    \label{fig:running-sets}
  \end{figure}

  \textbf{Precision.}
  We use the network in \autoref{fig:toy-transformer} with the following fixed parameters:
  \begin{equation}
   W_{qkv}^{\top}=\begin{bmatrix}
   3.1&-0.2&1.0&0.1&0.8&-3.0\\
   0.7&-1.1&-4.3&1.2&-0.5&0.6
   \end{bmatrix},\qquad
   W_o=\begin{bmatrix}0.112&0.029\\0.014&0.153\end{bmatrix}
   \label{eq:ex-weights}
  \end{equation}
  $b_{qkv}=(-0.1,0.1,0,0.2,0.2,-0.2)$, $b_o=(0.32122,0.95628)$, $\gamma=(1.6,1.8)$, $\beta=(0.2,0.3)$, and $\tau=10^{-5}$.
  We represent the input interval by the zonotope
  \begin{equation}
    X=\begin{bmatrix}0&0\\0.7&-0.8\end{bmatrix}
      +0.02\begin{bmatrix}\epsilon_{11}&\epsilon_{12}\\
                          \epsilon_{21}&\epsilon_{22}\end{bmatrix},\qquad
    c=\begin{bmatrix}0&0\\0.7&-0.8\end{bmatrix},\qquad
    \epsilon_{ij}\in[-1,1]
    \label{eq:ex-precision-input}
  \end{equation}
  The top row of \autoref{fig:running-sets} follows \deepz{} through LayerNorm, $QK^\top$, and softmax.
  Each nonlinear operator bounds its input with fresh generators and the attention output has growth to 34 generators:
  \begin{equation}
    [A(X)]_{1,1}=z_0+\textstyle\sum_{i=1}^{34}g_i\epsilon_i,\qquad
   z_0\approx0.36,\qquad \textstyle\sum_{i}|g_i|\approx1.61
   \label{eq:ex-deepz-zono}
  \end{equation}
  Concretizing \autoref{eq:ex-deepz-zono} gives $[A(X)]_{1,1}\in[-1.24,1.98]$ (endpoints rounded outward), and cannot verify the property of $[A(X)]_{1,1} > 0$ (\autoref{eq:ex-gpt-property}).

  \tool{} instead bounds the attention-residual component as one \emph{fused} block (\autoref{sec:fused-attn}):
  \begin{equation}
   A(x)\in A(c)+J_A(c)G\epsilon+[-R_A,R_A], \qquad \|x-x_0\|_\infty\le r
   \label{eq:ex-fused-zono}
  \end{equation}
  The bottom row of \autoref{fig:running-sets} separates the three contributions in \autoref{eq:ex-fused-zono}.
  The output center $A(c)$ computes the attention output at the input center $x=c$.
  The Jacobian image $J_A(c)G$ carries each input generator through both skip and attention branch, preserving correlations between the two branches.
  The fused generators encode the nonlinear remainder bounded by $R_A$.
  Adding the three terms together gives the final bound, $[A(X)]_{1,1}\in[0.11,0.69]$, and verifies $[A(X)]_{1,1}>0$.

\section{The \tool{} Approach}
  \label{sec:algorithm}

  \begin{algorithm}[t]
  \caption{Structured-zonotope bound propagation}
  \label{alg:method}
  \KwIn{$F=\Phi_T\circ\cdots\circ \Phi_1$; input center and radius $(x_0,r)$; $m$ shared and $q$ local generators}
  \KwOut{$(\underline y,\overline y)$ bounding $F(x)$ for every $\|x-x_0\|_\infty \le r$}

  $Z\leftarrow \mathcal{Z}(x_0,0,\{\operatorname{diag}(r_s)\}_{s=1}^S,0)$ \tcp*{Initialize local input generators (\autoref{sec:method-domain})}\label{alg:method:init}
  \For{$t\leftarrow1$ \KwTo $T$}{
    \For(\tcp*[f]{Represent $|e_s|\le b_s$ with local generators (\autoref{sec:method-precision})}){$s\leftarrow1$ \KwTo $S$}{
      $I_s\leftarrow\{i:(b_s)_i>0\}$;
      $L_s\leftarrow[L_s\mid\operatorname{diag}(b_s)_{:,I_s}]$;
      $b_{s,I_s}\leftarrow0$\;\label{alg:method:convert}
    }

    \eIf{$\Phi_t$ is an attention block}{
      $Z\leftarrow\mathsf{FusedAttentionTransform}(\Phi_t,Z)$ \tcp*{Bound Attention residual (\autoref{sec:fused-attn})}\label{alg:method:fused}
    }{
      $Z\leftarrow\mathsf{SequentialMLPTransform}(\Phi_t,Z)$ \tcp*{Bound MLP residual (\autoref{sec:unfused-mlp})}\label{alg:method:layer}
    }

    $K\leftarrow\mathsf{SelectShared}(G,m)$ \tcp*{Select at most $m$ shared generators (\autoref{sec:method-precision})} \label{alg:method:select}
    $b\leftarrow b+\sum_{j\notin K}|G_j|;\qquad G\leftarrow G_K$\;\label{alg:method:reduce}
    \For(\tcp*[f]{Select at most $q$ local generators (\autoref{sec:method-precision})}){$s\leftarrow1$ \KwTo $S$}{
      $K_s\leftarrow\mathsf{SelectLocal}(L_s,q);\quad b_s\leftarrow b_s+\sum_{j\notin K_s}|L_{s,j}|; \quad L_s\leftarrow L_{s,K_s}$\;\label{alg:method:reduce-local}
    }
  }
  \Return{$\operatorname{concretize}(Z)$}\;\label{alg:method:hull}
  \end{algorithm}

  \autoref{alg:method} describes how \tool{} computes bounds for an input interval $(x_0,r)$ through a Transformer $F$.
  First, \tool{} represents the input interval with local generators (\autoref{alg:method:init}).
  For each residual component, \tool{} represents $|e|\le b$ with zonotope generators (\autoref{alg:method:convert}).
  It applies fused transformation to the attention block (\autoref{alg:method:fused}) and propagates layer-by-layer through the MLP block (\autoref{alg:method:layer}).
  \tool{} limits shared generators to $m$ (\autoref{alg:method:select} to \autoref{alg:method:reduce}).
  It also limits local generators to $q$ per token (\autoref{alg:method:reduce-local}).
  Finally, \tool{} returns the output bounds (\autoref{alg:method:hull}).

  The key idea is to preserve dependencies that can tighten later bounds.
  During bound propagation, \tool{} uses \emph{shared generators} for cross-token relations (expensive) and \emph{local generators} for token-specific relations (linear cost in the sequence length), while \emph{intervals} keep the number of generators from growing with network depth.

  \subsection{Structured Representation}
  \label{sec:method-domain}

  \tool{} represents the input using local generators (\autoref{alg:method:init}) because each input coordinate initially affects only its token.
  Attention later mixes tokens, so \tool{} uses shared generators to preserve cross-token relations and an interval to bound the remaining terms independently.
  This design is inspired by our recent work on preserving cross-dimension relations in input properties~\citep{duong2026verifying,duong2026verifying2}.

  \begin{definition}[Structured Zonotope]
  \label{def:method-domain}
  For $S$ tokens, let $c$ be the center, $m$ shared generators $G$, $q_s$ local generators $L_s$ at token $s$, and $b\ge0$ the interval radius.
  The represented set is:
  \begin{equation}
   \mathcal Z(c,G,L,b)=\Big\{c+\sum_{j=1}^{m}G_j\epsilon_j+
   \sum_{s=1}^{S}\operatorname{emb}_s(L_s\epsilon^{\mathrm{loc}}_s)+e:
   \|\epsilon\|_\infty\le1,\|\epsilon^{\mathrm{loc}}_s\|_\infty\le1,\ |e|\le b\Big\}
  \end{equation}
  $\operatorname{emb}_s(L_s\epsilon^{\mathrm{loc}}_s)$ forms a matrix whose $s$-th row is $L_s\epsilon^{\mathrm{loc}}_s$ and whose other rows are zero.
  A generator $G_j$ can change several tokens through the same $\epsilon_j$, while a generator in $L_s$ changes only token $s$.
  \end{definition}

  The coordinate radius of $Z=\mathcal Z(c,G,L,b)$ is $r(Z)$ and interval bounds $[c-r(Z),c+r(Z)]$:
  \begin{equation}
   r(Z)=\sum_{j=1}^{m}|G_j|
        +\sum_{s=1}^{S}\operatorname{emb}_s\!\Big(\sum_{j=1}^{q_s}|L_{s,j}|\Big)+b
   \label{eq:method-radius}
  \end{equation}

  \subsection{Fused Attention Block Transform}
  \label{sec:fused-attn}

  For input $x$, let $A(x)$ denote the attention output with the skip connection.
  Let $n$ be the LayerNorm output.
  For head $h$, let $Q_h$, $K_h$, and $V_h$ be the projected queries, keys, and values.
  Let $U_h$ be the score matrix and $P_h$ the softmax output:
  \begin{align}
    \label{eq:method-attention}
   n&=\operatorname{LN}(x), \qquad Q\|K\|V=W_{qkv}n+b_{qkv}, \qquad U_h=Q_hK_h^\top/\sqrt{d_h}, \qquad P_h=\operatorname{softmax}(U_h) \nonumber\\
   A(x)&=x+W_o\underbrace{[\,P_1V_1\mid\cdots\mid P_HV_H\,]}_{\operatorname{concat}(P_hV_h)}+b_o
  \end{align}

  Note that attention contains the related products $QK^\top$ and $PV$, while softmax couples every key in a row of $U_h$.
  Bounding these operations separately loses relations among their terms before adding to the skip branch.
  \tool{} instead propagates each input generator through the complete attention residual and bounds only the remaining nonlinear error.

  To construct output bounds for $A$, \tool{} computes the three terms in \autoref{eq:method-fused-construction}, \eg, $A(c)$, $H_j+J_B(c)H_j$, and $e_A(c,v)$.
  First, \tool{} computes $A(c)$ simply by evaluating \autoref{eq:method-attention} at $x=c$.

  Second, \tool{} computes how $A$ changes when the coefficient of $H_j$ varies from zero:
  \begin{equation}
   x(t)=c+tH_j, \qquad
   \dot x=\left.\frac{dx(t)}{dt}\right|_{t=0}=H_j, \qquad
   \dot A=\left.\frac{dA(x(t))}{dt}\right|_{t=0}=J_A(c)H_j
   \label{eq:attention-direction}
  \end{equation}
  Applying the chain rule along this path gives:
  \begin{equation}
    \label{eq:method-attention-jvp}
    \begin{aligned}
      \dot n&=J_{\rm LN}(c)\dot x, \qquad
      \dot Q\|\dot K\|\dot V=W_{qkv}\dot n, \qquad
      \dot U=(\dot QK^\top+Q\dot K^\top)/\sqrt{d_h}, \qquad
      \dot P=J_{\rm SM}(U)\dot U\\
      \dot A&=\dot x+W_o\operatorname{concat}(\dot P_hV_h+P_h\dot V_h)
    \end{aligned}
  \end{equation}
  where each undotted quantity
  (\eg, $Q$, $K$, $V$, $U$, and $P$) denotes its value evaluated at $x=c$\tl{, and $J_{\rm SM}(U)$ is the Jacobian of softmax at the scores $U$}.
  \autoref{eq:attention-direction} and \autoref{eq:method-attention-jvp} give $J_A(c)H_j = H_j+J_B(c)H_j = \dot x+W_o\operatorname{concat}(\dot P_hV_h+P_h\dot V_h)$.

  Third, Taylor expansion identifies error term $e_A(c,v)$ as $A(c+v)-A(c)-J_A(c)v$ (\autoref{eq:method-first-order-taylor}).
  \tool{} then computes $R_A$ satisfying:
  \begin{equation}
   |e_A(c,v)| = |A(c+v)-A(c)-J_A(c)v|\le R_A
   \label{eq:method-fused-error-contract}
  \end{equation}
  Intuitively, $R_A$ bounds the nonlinear terms introduced by $QK^\top$, softmax, and $PV$.
  \tool{} computes a bound $R_A$ for the nonlinear attention error $e_A$ using~\autoref{thm:attention-radius}.
  \begin{theorem}[Attention nonlinear-error bound]
  \label{thm:attention-radius}
  Let $A$ be the attention residual in \autoref{eq:method-attention}, and the input zonotope be $Z=\{c+\sum_{j=1}^mH_j\epsilon_j:|\epsilon_j|\le1\}$.
  Let $P=P(c)$ and $V=V(c)$.
  Assume nonnegative tuples
  $(f_P, R_P, f_V, R_V)$ denote the bounds on the first-order change ($f$) and the Taylor remainder ($R$) of $P$ and $V$, respectively.
  satisfy $|J_P(c)(x-c)|\le f_P$, $|P(x)-P-J_P(c)(x-c)|\le R_P$, $|J_V(c)(x-c)|\le f_V$, and $|V(x)-V-J_V(c)(x-c)|\le R_V$ for every $x\in Z$, then
  \begin{equation}
   \underbrace{|A(x)-A(c)-J_A(c)(x-c)|}_{e_A}
   \le
   \underbrace{|W_o|\operatorname{concat}\left(
   R_P|V|+PR_V+(f_P+R_P)(f_V+R_V)
   \right)}_{R_A}
   \label{eq:attention-radius}
  \end{equation}
  \end{theorem}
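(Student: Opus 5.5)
The plan is to reduce the residual error of $A$ to the error of the bilinear product $P_hV_h$ for each head, and then expand that product. The skip term $x$ and the bias $b_o$ are affine, so they drop out of $e_A$. Concretely, write $v=x-c$, $\Delta P_h=P_h(x)-P_h$ and $\Delta V_h=V_h(x)-V_h$. Then
\[
A(x)-A(c)-J_A(c)v = W_o\operatorname{concat}\bigl(P_h(x)V_h(x)-P_hV_h-(J_{P_h}(c)v)V_h-P_h(J_{V_h}(c)v)\bigr).
\]
This uses the Jacobian-vector product from \autoref{eq:method-attention-jvp}: the attention part of $J_A(c)v$ is exactly $W_o\operatorname{concat}(\dot P_hV_h+P_h\dot V_h)$ with $\dot P_h=J_{P_h}(c)v$ and $\dot V_h=J_{V_h}(c)v$. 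So the first step is just matching these terms and cancelling the identity parts $v$ against each other.

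Next, expand the product using $P_h(x)=P_h+\Delta P_h$ and $V_h(x)=V_h+\Delta V_h$:
\[
P_h(x)V_h(x)-P_hV_h=\Delta P_h\,V_h+P_h\,\Delta V_h+\Delta P_h\,\Delta V_h .
\]
Subtracting the linear terms leaves
\[
(\Delta P_h-J_{P_h}v)V_h+P_h(\Delta V_h-J_{V_h}v)+\Delta P_h\,\Delta V_h .
\]

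Then bound each piece entrywise by the triangle inequality, using $|MN|\le|M||N|$ for matrix products.
\begin{itemize}
\item The first piece is at most $R_P|V|$, by the assumed bound on the remainder of $P$.
\item The second piece is at most $P R_V$. Here $|P|=P$ because softmax outputs are nonnegative.
\item For the cross term, $|\Delta P_h|\le|J_Pv|+|\Delta P_h-J_Pv|\le f_P+R_P$, and likewise $|\Delta V_h|\le f_V+R_V$. Hence $|\Delta P_h\Delta V_h|\le(f_P+R_P)(f_V+R_V)$ as a matrix product of nonnegative matrices.
\end{itemize}
Finally, concatenate over heads and apply $|W_o y|\le|W_o||y|$ to obtain \autoref{eq:attention-radius}. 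All of this holds for every $x\in Z$, since the hypotheses are stated for every $x\in Z$.

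The main obstacle is a bookkeeping issue: checking that $J_A(c)v$ splits exactly into the skip term plus $W_o\operatorname{concat}(\dot P_hV_h+P_h\dot V_h)$. This requires treating $P$ and $V$ as functions of $x$ that already absorb the LayerNorm and the QKV projection, and interpreting the products $R_P|V|$ and similar as matrix products with the same shapes as $P_hV_h$. I would state this convention explicitly. The rest is elementary once the bilinear expansion is in place. Nothing beyond the triangle inequality and the sign of $P$ is needed, and the theorem deliberately takes $f_P$, $R_P$, $f_V$ and $R_V$ as given, so no bound on softmax curvature is required here.
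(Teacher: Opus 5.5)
Your proposal is correct and follows essentially the same route as the paper's proof. The paper also writes $e_A=W_o\operatorname{concat}(e_PV+Pe_V+\delta P\,\delta V)$, bounds the three terms by $R_P|V|$, $PR_V$, and $(f_P+R_P)(f_V+R_V)$ using $P\ge0$ and the triangle inequality, and then applies $|W_o|$; the only difference is that the paper's appendix proof also constructs a valid tuple $(f_P,R_P,f_V,R_V)$ through LayerNorm, the $QKV$ projection, the scores, and softmax, which the theorem as stated takes as a hypothesis, exactly as you observe.
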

  \begin{proof}[Proof sketch]
  Expanding the attention-value product gives the three terms on the right-hand side of \autoref{eq:attention-radius}.
  The triangle inequality bounds these terms by $R_P|V|$, $PR_V$, and $(f_P+R_P)(f_V+R_V)$, respectively.
  \tool{} derives $f_P,R_P,f_V,$ and $R_V$ in \autoref{app:attention-remainder}.
  \end{proof}

  \autoref{thm:attention-radius} bounds the nonlinear attention error with an interval.
  \tool{} improves precision by representing the part caused by the softmax error (\eg, $|W_o|\operatorname{concat}(R_P|V|)$) with a zonotope.
  Applying the residual Taylor expansion in~\autoref{eq:method-first-order-taylor} to $P$ gives $P(x)=P(c)+J_P(c)(x-c)+e_P$, hence $e_P=P(x)-P(c)-J_P(c)(x-c)$.
  The entry $(e_P)_{ihj}$ is the softmax error for query token $i$, attention head $h$, and key token $j$.
  Each row of $P(x)$ and $P(c)$ sums to one (softmax normalizes each row), and each row of $J_P(c)(x-c)$ sums to zero (the derivative of this fixed row sum is zero).
  Thus, $\sum_j (e_P)_{ihj}=1-1-0=0$.

  Let $\bar v_{hj}=W_o^{(h)}V_{hj}(c)$ be the projected center value at key token $j$.
  Because the coefficients $(e_P)_{ihj}$ sum to zero, subtracting $a_{ih}$ from every $\bar v_{hj}$ does not change their weighted sum.
  Since $|e_P|\le R_P$, we write $(e_P)_{ihj}=\xi_{ihj}(R_P)_{ihj}$ with $|\xi_{ihj}|\le1$, which gives
  \begin{equation}
   \begin{aligned}
   \sum_j (e_P)_{ihj}\bar v_{hj}
   &=\sum_j (e_P)_{ihj}(\bar v_{hj}-a_{ih})\\
   &=\sum_j \xi_{ihj}(R_P)_{ihj}(\bar v_{hj}-a_{ih})
   \end{aligned}
   \label{eq:attention-zero-sum}
  \end{equation}
  Thus, the softmax term is a combination of the generators $(R_P)_{ihj}(\bar v_{hj}-a_{ih})$ with coefficients $\xi_{ihj}$, and their radius $\sum_j (R_P)_{ihj}|\bar v_{hj}-a_{ih}|$ depends on $a_{ih}$.
  \tool{} chooses $a_{ih}$ as an elementwise weighted median of $\bar v_{hj}$ with weights $(R_P)_{ihj}$, which minimizes this radius in output. 

  Finally, \tool{} computes a structured zonotope $Z_A=\mathcal Z(c_A,G_A,L_A,b_A)$ (\autoref{def:method-domain}) for $A$:
  \begin{align}
    \label{eq:attn-zonotope-coefficients}
    c_A&=A(c), \qquad (G_A)_j=J_A(c)G_j, \nonumber\\
   (L_A)_i&=[\,(J_A(c)\operatorname{emb}_i(L_{i,k}))_i\,]_{k=1}^{q_i}
      \;\big|\;[\,(R_P)_{ihj}(\bar v_{hj}-a_{ih})\,]_{h,j}, \nonumber\\
   (b_A)_i&=\left[|W_o|\operatorname{concat}\left(PR_V+(f_P+R_P)(f_V+R_V)\right)\right]_i
      +\sum_{s\ne i}\sum_{k=1}^{q_s}|(J_A(c)\operatorname{emb}_s(L_{s,k}))_i|
   \end{align}
  \autoref{thm:contracted-attention} establishes soundness of the output zonotope in \autoref{eq:attn-zonotope-coefficients}.
  \begin{theorem}[Fused attention soundness]
  \label{thm:contracted-attention}
  Let the input zonotope be $Z=\mathcal Z(c,G,L,0)$ with local generators $L_i=[L_{i,1},\ldots,L_{i,q_i}]$, and let $A$ be the attention in \autoref{eq:method-attention}.
  Suppose a nonnegative tuple $(f_P,R_P,f_V,R_V)$ satisfies the four bounds in \autoref{thm:attention-radius} for every $x\in Z$.
  For each query token $i$ and head $h$, choose $a_{ih}$ shared across all key tokens $j$.
  Then $Z_A=\mathcal Z(c_A,G_A,L_A,b_A)$ defined by \autoref{eq:attn-zonotope-coefficients} satisfies $A(Z)\subseteq Z_A$.
  \end{theorem}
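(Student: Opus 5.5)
Fix $x\in Z$ and write $x=c+v$ with $v=\sum_j G_j\epsilon_j+\sum_s\operatorname{emb}_s(L_s\epsilon^{\mathrm{loc}}_s)$. The goal is to exhibit, for this $x$, coefficients $\epsilon$, $\epsilon^{\mathrm{loc}}$ and an interval error $e$ with $|e|\le b_A$ that reproduce $A(x)$ in the form required by \autoref{def:method-domain} for $Z_A$.

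\textbf{Step 1: linear part.} Start from the exact identity $A(x)=A(c)+J_A(c)v+e_A(c,v)$ of \autoref{eq:method-first-order-taylor}, which holds because $A$ is smooth (LayerNorm with $\tau>0$, softmax, and products are all smooth). By linearity of $J_A(c)$,
\[
J_A(c)v=\sum_j (G_A)_j\epsilon_j+\sum_s\sum_k J_A(c)\operatorname{emb}_s(L_{s,k})\,(\epsilon^{\mathrm{loc}}_s)_k .
\]
The image $J_A(c)\operatorname{emb}_s(L_{s,k})$ generally has nonzero rows at every token $i$, because attention mixes tokens. Split it row by row. The row $i=s$ is kept as the local generator $(J_A(c)\operatorname{emb}_i(L_{i,k}))_i$ in $(L_A)_i$, with the same coefficient $(\epsilon^{\mathrm{loc}}_i)_k$. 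Each row $i\ne s$ contributes a term of absolute value at most $|(J_A(c)\operatorname{emb}_s(L_{s,k}))_i|$, since $|(\epsilon^{\mathrm{loc}}_s)_k|\le1$. Absorb these into the interval; their sum is exactly the second summand of $(b_A)_i$. This relaxation is sound because the interval error $e$ in \autoref{def:method-domain} may be chosen independently for each coordinate.

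\textbf{Step 2: nonlinear part.} Open the proof of \autoref{thm:attention-radius}. Since $A(x)=x+W_o\operatorname{concat}(P_h(x)V_h(x))+b_o$ and $x$ enters linearly, only the bilinear term contributes to $e_A$. Write $P(x)=P+\Delta_P$ and $V(x)=V+\Delta_V$, where $\Delta_P=J_P(c)v+e_P$ with $|J_P(c)v|\le f_P$ and $|e_P|\le R_P$, and analogously for $\Delta_V$ with $f_V,R_V$. Since $J_A(c)v$ contains exactly $P\,J_V(c)v+J_P(c)v\,V$ inside the $W_o$ term, we get
\[
e_A=W_o\operatorname{concat}\bigl(e_P V+P\,e_V+\Delta_P\Delta_V\bigr).
\]
The last two terms are bounded entrywise using the triangle inequality and $P\ge0$. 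They give a quantity of magnitude at most $|W_o|\operatorname{concat}(PR_V+(f_P+R_P)(f_V+R_V))$, which is the first summand of $(b_A)_i$. These terms go into $e$.

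\textbf{Step 3: softmax term (main obstacle).} This is the term $W_o\operatorname{concat}(e_PV)$, to be represented with generators rather than an interval. Split $W_o$ columnwise into per-head blocks $W_o^{(h)}$; row $i$ of this term is then $\sum_h\sum_j (e_P)_{ihj}\bar v_{hj}$. Next verify the zero-sum property $\sum_j(e_P)_{ihj}=0$. This holds because each row of $P(x)$ and of $P(c)$ sums to one, and differentiating the constant row sum shows that each row of $J_P(c)v$ sums to zero. Applying \autoref{eq:attention-zero-sum} with the chosen $a_{ih}$ rewrites the term as $\sum_{h,j}\xi_{ihj}(R_P)_{ihj}(\bar v_{hj}-a_{ih})$, with $\xi_{ihj}=(e_P)_{ihj}/(R_P)_{ihj}$ (take $\xi_{ihj}=0$ when $(R_P)_{ihj}=0$). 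Then $|\xi_{ihj}|\le1$. These $\xi_{ihj}$ serve as the fresh local coefficients of token $i$, multiplying the appended generators in $(L_A)_i$.

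The points needing care are that $\xi$ depends on $x$, which is allowed because membership only requires existence of coefficients, and that the new local coefficients are distinct per token $i$, so giving each token its own $\xi_{i\cdot\cdot}$ is legitimate. Assembling Steps 1–3 shows $A(x)\in Z_A$.
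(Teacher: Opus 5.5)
Your proof is correct and follows essentially the same route as the paper's. Both use the exact identity $e_A=W_o\operatorname{concat}(e_PV+Pe_V+\delta P\,\delta V)$, rewrite $e_PV$ via the zero-sum property into the new local generators of $(L_A)_i$, bound the other two terms by the triangle inequality, and split each local generator image row by row with cross-token rows absorbed into $b_A$; you additionally make explicit the $(R_P)_{ihj}=0$ case and the per-token independence of the new coefficients $\xi_{i\cdot\cdot}$, which the paper leaves implicit.
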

  \begin{proof}[Proof sketch]
  \autoref{eq:attention-zero-sum} represents the softmax error with the local generators in $L_A$.
  The other nonlinear terms satisfy the bounds in \autoref{thm:attention-radius}, while $b_A$ bounds the cross-token parts of the local input generator images.
  The full proof is in \autoref{app:fused-attn-sound-proof}.
  \end{proof}

  \subsection{MLP Block Transform}
  \label{sec:unfused-mlp}

  For input $x$, the MLP block computes
  \begin{equation}
   M(x)=x+W_2g(W_1\operatorname{LN}(x)+b_1)+b_2
   \label{eq:method-unfused-mlp}
  \end{equation}
  The MLP contains two nonlinear operations with different structures.
  LayerNorm couples the features within each token through their mean and variance, so \tool{} bounds the complete LayerNorm as one fused transform (\autoref{sec:layernorm-fused}).
  GELU acts independently on each coordinate, and the surrounding affine maps are exact (\autoref{sec:gelu-bound}).
  After LayerNorm, the MLP has no bilinear products such as $QK^\top$ and $PV$ in attention (\autoref{sec:fused-attn}).
  \tool{} therefore propagates the affine maps and GELU sequentially before combining the block with the skip connection.

  \subsubsection{LayerNorm}\label{sec:layernorm-fused}
  The mean and variance in LayerNorm depend on the same token features.
  Bounding these quantities separately loses this relation, so \tool{} treats the complete LayerNorm as one fused transform using the Taylor expansion in \autoref{eq:method-fused-construction}.
  At token $s$, the input zonotope has the form:
  \begin{equation}
   x_s=c_s+\sum_j(G_j)_s\epsilon_j+L_s\epsilon_s^{\rm loc}
   \label{eq:method-ln-input}
  \end{equation}
  Applying the Taylor expansion to the complete LayerNorm gives
  \begin{equation}
   \operatorname{LN}(x_s)
   =\operatorname{LN}(c_s)
     +\sum_jJ_{\rm LN}(c_s)(G_j)_s\epsilon_j
     +J_{\rm LN}(c_s)L_s\epsilon_s^{\rm loc}
     +e_{N,s},\qquad
   |e_{N,s}| \le (R_N)_s
  \label{eq:method-ln-expansion}
  \end{equation}
  Thus, \tool{} computes the LayerNorm output in the form of a structured zonotope as:
  \begin{equation}
   Z_N=\mathcal Z\!\left(
   \operatorname{LN}(c),
   \{J_{\rm LN}(c)G_j\}_j,
   \{J_{\rm LN}(c_s)L_s\}_{s=1}^{S},
   R_N\right)
   \label{eq:method-ln-zonotope}
  \end{equation}
  where $c=(c_1,\ldots,c_S)$ is the center, and $\operatorname{LN}(c)=(\operatorname{LN}(c_1),\ldots,\operatorname{LN}(c_S))$.
  \tool{} derives the remaining term $R_N$ and proves its soundness (\autoref{thm:layernorm-soundness}) in \autoref{app:layernorm}.

  \subsubsection{GELU}\label{sec:gelu-bound}
  The MLP applies tanh-GELU elementwise:
  \begin{equation}
   g(t)=\frac{t}{2}\Big[1+\tanh\!\Big(\sqrt{\frac{2}{\pi}}\left(t+0.044715t^3\right)\Big)\Big]
   \label{eq:method-gelu-definition}
  \end{equation}
  For a GELU input zonotope $Z=\mathcal Z(c,G,L,b)$, \tool{} first computes the coordinate interval $[\ell,u]$.
  For each coordinate with $\ell_k<u_k$, \tool{} separates GELU into the linear term $a_kt$ and the remaining error $g(t)-a_kt$.
  \begin{equation}
   a_k=\frac{g(u_k)-g(\ell_k)}{u_k-\ell_k},\qquad
   r_k^{\rm lb}\le g(t)-a_kt\le r_k^{\rm ub},\qquad t\in[\ell_k,u_k]
   \label{eq:method-gelu-residual}
  \end{equation}
  where the error bounds $r^{\rm lb}$ and $r^{\rm ub}$ are computed in \autoref{app:gelu-affine}.
  With $m=(r^{\rm lb}+r^{\rm ub})/2$ and $d=(r^{\rm ub}-r^{\rm lb})/2$, \autoref{eq:method-gelu-residual} gives the affine GELU bound:
  \begin{equation}
   g(z)\in a\odot z+m+\{d\odot\epsilon^g:|\epsilon^g|\le1\}
   \label{eq:method-gelu-affine}
  \end{equation}
  Applying \autoref{eq:method-gelu-affine} to $Z$ gives the GELU output:
  \begin{equation}
   Z_g=\mathcal Z\!\left(
   a\odot c+m,
   \{a\odot G_j\}_j,
   \{[\,\operatorname{diag}(a_s)L_s\mid\operatorname{diag}(d_s)\,]\}_{s=1}^{S},
   |a|\odot b
   \right)
   \label{eq:method-gelu-zonotope}
  \end{equation}
  GELU bounds are proved sound in \autoref{thm:gelu-soundness} (\autoref{app:gelu-affine}).

  \subsection{Zonotope Conversion and Reduction}
  \label{sec:method-precision}

  In \autoref{def:method-domain}, $|e|\le b$ term means that every coordinate of $e$ varies independently.
  Before applying block transform, \tool{} writes these values as local zonotope generators (\autoref{alg:method:convert}).
  The skip and branch then use the same $\epsilon$ for each value while maintaining the original set $Z$ (\autoref{lem:method-conversion}).

  \begin{lemma}[Interval-to-zonotope conversion]
  \label{lem:method-conversion}
  A structured zonotope with $S$ tokens satisfies
  \begin{equation}
   \mathcal Z(c,G,L,b)
   =\mathcal Z\!\left(c,G,
   \left\{[L_s\mid\operatorname{diag}(b_s)]\right\}_{s=1}^{S},0\right)
   \label{eq:method-conversion}
  \end{equation}
  \end{lemma}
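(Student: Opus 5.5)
The plan is to prove the set equality by double inclusion, token by token, using only \autoref{def:method-domain}. The center $c$ and the shared term $\sum_j G_j\epsilon_j$ are identical on both sides and do not interact with the quantities being changed. So it suffices to show, for each token $s$, that the set of row vectors
\[
\{L_s\epsilon^{\mathrm{loc}}_s+e_s:\|\epsilon^{\mathrm{loc}}_s\|_\infty\le1,\ |e_s|\le b_s\}
\]
equals
\[
\{[L_s\mid\operatorname{diag}(b_s)]\,\tilde\epsilon_s:\|\tilde\epsilon_s\|_\infty\le1\}.
\]
The key structural fact is that the interval term $e$ decomposes row-wise as $e=\sum_s\operatorname{emb}_s(e_s)$ with independent constraints $|e_s|\le b_s$. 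Likewise, the local coefficients $\epsilon^{\mathrm{loc}}_s$ are independent across tokens. Hence the Minkowski-sum structure factors over tokens, and the per-token equality lifts to the whole set.

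For the inclusion $\subseteq$, take any point with coefficients $\epsilon,\epsilon^{\mathrm{loc}}_s$ and interval term $e$ satisfying $|e|\le b$. For each coordinate $i$ of token $s$:
\begin{itemize}
\item if $(b_s)_i>0$, set $\eta_{s,i}=(e_s)_i/(b_s)_i\in[-1,1]$;
\item if $(b_s)_i=0$, then $(e_s)_i=0$ is forced, and we set $\eta_{s,i}=0$.
\end{itemize}
Then $e_s=\operatorname{diag}(b_s)\eta_s$. The stacked coefficient vector $\tilde\epsilon_s=(\epsilon^{\mathrm{loc}}_s,\eta_s)$ has $\|\tilde\epsilon_s\|_\infty\le1$, and it reproduces the same point in the right-hand set with interval radius $0$.

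For the inclusion $\supseteq$, split a coefficient vector $\tilde\epsilon_s$ of the right-hand side into its first $q_s$ entries and its remaining $d$ entries $\eta_s$. Defining $e_s=\operatorname{diag}(b_s)\eta_s$ gives $|e_s|\le b_s$ coordinatewise, because $b_s\ge0$ and $|\eta_s|\le1$. The same point therefore lies in the left-hand set.

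The only step needing care is the zero-radius coordinates. These are exactly why \autoref{alg:method} appends only the columns indexed by $I_s$. Zero columns of $\operatorname{diag}(b_s)$ contribute nothing, so including or omitting them does not change the represented set. I would remark that the lemma therefore also holds for the pruned version $[L_s\mid\operatorname{diag}(b_s)_{:,I_s}]$ used on \autoref{alg:method:convert}, which is the form the algorithm actually needs.

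Beyond this there is no real obstacle: the argument is a direct reparametrization. The lemma is an equality rather than merely an inclusion precisely because the interval coordinates are mutually independent and each one gets its own fresh generator.
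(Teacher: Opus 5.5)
Your proposal is correct and takes the same approach as the paper, whose entire proof is the one-line reparametrization $|e_s|\le b_s \iff e_s=\operatorname{diag}(b_s)\epsilon_s$ for some $|\epsilon_s|\le1$. Your double inclusion, your handling of the zero-radius coordinates, and your remark about the pruned columns $I_s$ used in \autoref{alg:method} spell out details the paper leaves implicit.
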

  \begin{proof}[Proof]
  The condition $|e_s|\le b_s$ is equivalent to $e_s=\operatorname{diag}(b_s)\epsilon_s$ for some $|\epsilon_s|\le1$.
  \end{proof}
  After each block transform, \tool{} retains at most $m$ shared and $q$ local generators per token (\autoref{alg:method:select} to \autoref{alg:method:reduce-local}).
  \tool{} ranks the generators by summed absolute coefficient magnitude.
  The selected shared generators remain in $G$, while $|G_j|$ is added to $b$ for every discarded generator $G_j$.
  The resulting zonotope contains the original zonotope (\autoref{lem:method-reduction}).
  \begin{lemma}[Generator reduction]
  \label{lem:method-reduction}
  For any set $K$ of retained shared-generator indices,
  \begin{equation}
   \mathcal Z(c,G,L,b)
   \subseteq
   \mathcal Z\!\Big(c,G_K,L,b+\sum_{j\notin K}|G_j|\Big)
   \label{eq:method-reduction}
  \end{equation}
  \end{lemma}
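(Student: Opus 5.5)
The plan is to argue pointwise, by showing that every element of the left-hand side of \autoref{eq:method-reduction} can be written as an element of the right-hand side. Fix $x\in\mathcal Z(c,G,L,b)$. By \autoref{def:method-domain}, we can choose coefficients $\epsilon$ with $\|\epsilon\|_\infty\le1$, local coefficients $\epsilon^{\mathrm{loc}}_s$ with $\|\epsilon^{\mathrm{loc}}_s\|_\infty\le1$, and an interval term $e$ with $|e|\le b$ such that
\[
x=c+\sum_{j\in K}G_j\epsilon_j+\sum_{s=1}^{S}\operatorname{emb}_s(L_s\epsilon^{\mathrm{loc}}_s)+\Big(e+\sum_{j\notin K}G_j\epsilon_j\Big).
\]
Here we have simply split the shared-generator sum according to $K$ and moved the discarded generators into the interval term.

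Next, we set $e'=e+\sum_{j\notin K}G_j\epsilon_j$ and bound it coordinatewise. The triangle inequality, together with $|\epsilon_j|\le1$, gives
\[
|e'|\le|e|+\sum_{j\notin K}|G_j||\epsilon_j|\le b+\sum_{j\notin K}|G_j|.
\]
All absolute values and inequalities here are elementwise over the $S\times d$ array. This step is where the independence of interval coordinates matters. In the set of \autoref{def:method-domain}, each coordinate of the interval term may vary independently. So dropping the correlation that the shared $\epsilon_j$, $j\notin K$, induced across coordinates only enlarges the set. We do not need to preserve that correlation.

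Finally, we observe that the retained coefficients $(\epsilon_j)_{j\in K}$ and the unchanged local coefficients $\epsilon^{\mathrm{loc}}_s$ still satisfy the norm constraints. With $e'$ as the new interval term, $x$ therefore matches the defining form of $\mathcal Z(c,G_K,L,b+\sum_{j\notin K}|G_j|)$, which proves the inclusion.

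There is no real obstacle in this argument. The only point requiring care is the bookkeeping. The absolute values must be taken elementwise, so that $\sum_{j\notin K}|G_j|$ is a nonnegative array of the same shape as $b$. The new radius must also remain nonnegative, which holds because $b\ge0$ and the added sum is nonnegative. As a remark, the same argument applied per token to local generators justifies the local reduction in \autoref{alg:method:reduce-local}.
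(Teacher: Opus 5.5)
Your proof is correct and takes the same approach as the paper. The paper also bounds each discarded term by $|G_j\epsilon_j|\le|G_j|$ using $|\epsilon_j|\le1$ and absorbs the sum into the interval radius; your version just writes out the pointwise bookkeeping explicitly, namely the new interval term $e'$ and the elementwise triangle inequality.
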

  \begin{proof}[Proof]
  For discarded index $j$, $|G_j\epsilon_j|\le|G_j|$ because $|\epsilon_j|\le1$.
  Summing these gives \autoref{eq:method-reduction}.
  \end{proof}

  \subsection{Complexity}
  \label{sec:method-complexity}

  \noindent\textbf{Space complexity.}
  Between components, \autoref{alg:method:select} to \autoref{alg:method:reduce-local} retain at most $m$ shared and $q$ local generators per token for $S$ tokens of width $d$.
  The center, $b$, shared and local generators thus require
  $O(Sd)+O(mSd)+O(Sqd)=O(Sd(m+q+1))$.
  For an attention, \autoref{eq:method-conversion} adds $O(Sd^2)$ temporary coefficients before reduction, while the softmax for $H$ heads adds $O(HS^2d)$ and dominates the $O(HS^2)$ attention matrices.
  The space complexity of \tool{} is therefore:
  \begin{equation}
   O\bigl(Sd(m+q+1)+Sd^2+HS^2d\bigr)
   \label{eq:method-peak-space}
  \end{equation}
  \noindent\textbf{Time complexity.}
  Let $T_\Phi$ be the time for one full-component
  Jacobian vector product (JVP)
  , $T_{\rm tok}$ the time for one source-token attention JVP, and $T_R$ the time for computing the nonlinear remainder bound.
  \tool{} has the following time complexity: 
  \begin{equation}
   \begin{aligned}
   T_{\rm A}=O\bigl(mT_\Phi+S(q+d)T_{\rm tok}+T_R\bigr),\qquad \qquad
   T_{\rm MLP}=O\bigl((m+q+d)T_\Phi+T_R\bigr)
   \end{aligned}
   \label{eq:method-time}
  \end{equation}

\section{Evaluation}
  \label{sec:evaluation}

  \subsection{Experimental Design}
    \begin{table}[t]
      \centering
      \caption{Benchmark models and instances.}
      \vspace{-0.1em}
      \label{tab:eval-models}
      \begin{adjustbox}{width=0.7\linewidth}
      \begin{tabular}{llccccc}
      \toprule
      \textbf{Type} & \textbf{Model} & \textbf{Blocks} & \textbf{Width} & \textbf{Heads} & \textbf{Parameters} & \textbf{Instances} \\
      \midrule
      \multirow{2}{*}{Vision} & GPT-2 Small & $\{2, 4, 8, 12\}$ & 768  & 12 & 14.2M - 85.1M & 720 \\
       & GPT-2 Medium & $\{2, 4, 8, 24\}$ & 1024 & 16 & 25.3M - 302.4M & 720 \\
      \midrule
      \multirow{2}{*}{Text} & GPT-2 Small & $\{2, 4, 8, 12\}$ & 768  & 12 & 52.8M - 123.7M & 720 \\
       & GPT-2 Medium & $\{2, 4, 8, 24\}$ & 1024 & 16 & 76.7M - 353.8M & 720 \\
      \midrule
      \multicolumn{2}{l}{\textbf{Total}} & & & & & \textbf{2880} \\
      \bottomrule
      \end{tabular}
      \end{adjustbox}
    \end{table}

    \begin{table}[t]
      \centering
      \setlength{\tabcolsep}{2pt}
      \small
      \caption{Verification results (runtime in seconds). A dash (-) denotes out-of-memory errors.}
      \vspace{-0.1em}
      \label{tab:verification-results}
      \resizebox{\textwidth}{!}{%
      \begin{tabular}{llrrrrrrrrrrrrrrrrr}
      \toprule
      \multirow{2}{*}{\textbf{Dataset}} & \multirow{2}{*}{\textbf{Model}} & \multirow{2}{*}{\textbf{Block}} & \multicolumn{2}{c}{\textsc{IBP}} & \multicolumn{2}{c}{\deepz{}} & \multicolumn{2}{c}{\textsc{$\alpha\beta$-CROWN}} & \multicolumn{2}{c}{\textsc{CoVeNN}} & \multicolumn{2}{c}{\toolM{}} & \multicolumn{2}{c}{\toolD{}} & \multicolumn{2}{c}{\toolF{}} & \multicolumn{2}{c}{\toolA{}} \\
       &  &  & \textbf{Solved} & \textbf{Time} & \textbf{Solved} & \textbf{Time} & \textbf{Solved} & \textbf{Time} & \textbf{Solved} & \textbf{Time} & \textbf{Solved} & \textbf{Time} & \textbf{Solved} & \textbf{Time} & \textbf{Solved} & \textbf{Time} & \textbf{Solved} & \textbf{Time} \\
      \midrule
      \multirow{8.5}{*}{MNIST} & \multirow{4}{*}{\shortstack{GPT-2\\Small}} & 2 & 0 & 3.6 & - & - & - & - & 0 & 5769.8 & 36 & 215.3 & 36 & 826.2 & 108 & 544.5 & \textbf{144} & 1507.2 \\
       &  & 4 & 0 & 6.5 & - & - & - & - & 0 & 11256.8 & 0 & 464.1 & 0 & 1423.8 & \textbf{108} & 2463.5 & \textbf{108} & 4327.8 \\
       &  & 8 & 0 & 12.2 & - & - & - & - & 0 & 22244.4 & 0 & 824.2 & 0 & 2942.5 & 63 & 5609.2 & \textbf{90} & 10621.9 \\
       &  & 12 & 0 & 18.3 & - & - & - & - & 0 & 33440.1 & 0 & 1232.9 & 0 & 4435.2 & 54 & 6688.8 & \textbf{72} & 15862.0 \\
      \cmidrule{2-19}
       & \multirow{4}{*}{\shortstack{GPT-2\\Medium}} & 2 & 0 & 3.6 & - & - & - & - & 0 & 10472.1 & 36 & 291.9 & 36 & 729.3 & 108 & 788.3 & \textbf{126} & 1463.0 \\
       &  & 4 & 0 & 7.0 & - & - & - & - & 0 & 20718.1 & 0 & 592.3 & 0 & 1506.0 & 99 & 2531.2 & \textbf{108} & 3783.4 \\
       &  & 8 & 0 & 13.2 & - & - & - & - & 0 & 41476.1 & 0 & 876.4 & 0 & 3048.2 & 54 & 4800.8 & \textbf{81} & 8493.9 \\
       &  & 24 & 0 & 39.2 & - & - & - & - & 0 & 71257.8 & 0 & 2612.2 & 0 & 9595.7 & \textbf{9} & 5105.6 & \textbf{9} & 28909.9 \\
      \midrule
      \multirow{8.5}{*}{SST} & \multirow{4}{*}{\shortstack{GPT-2\\Small}} & 2 & 0 & 50.2 & - & - & - & - & 0 & 8122.4 & 24 & 112.5 & 50 & 581.9 & 87 & 790.2 & \textbf{94} & 1962.7 \\
       &  & 4 & 0 & 77.6 & - & - & - & - & 0 & 28785.1 & 0 & 343.9 & 0 & 1307.2 & 67 & 3815.7 & \textbf{81} & 5260.0 \\
       &  & 8 & 0 & 61.1 & - & - & - & - & 0 & 28194.8 & 0 & 649.3 & 1 & 2626.8 & 46 & 7849.6 & \textbf{61} & 11036.0 \\
       &  & 12 & 0 & 68.5 & - & - & - & - & 0 & 41694.3 & 0 & 1022.4 & 0 & 3992.4 & 72 & 12677.0 & \textbf{84} & 17096.6 \\
      \cmidrule{2-19}
       & \multirow{4}{*}{\shortstack{GPT-2\\Medium}} & 2 & 0 & 66.5 & - & - & - & - & 0 & 15233.8 & 42 & 143.5 & 66 & 729.9 & 91 & 1027.2 & \textbf{108} & 2105.2 \\
       &  & 4 & 0 & 70.3 & - & - & - & - & 0 & 27953.7 & 0 & 259.7 & 10 & 1548.2 & 74 & 3018.8 & \textbf{80} & 4919.5 \\
       &  & 8 & 0 & 78.8 & - & - & - & - & 0 & 53430.5 & 1 & 510.1 & 2 & 2955.6 & 56 & 6984.3 & \textbf{68} & 10102.5 \\
       &  & 24 & 0 & 112.2 & - & - & - & - & 0 & 149360.1 & 0 & 1507.0 & 0 & 8472.1 & 24 & 18784.8 & \textbf{25} & 29873.8 \\
      \midrule
      \multicolumn{3}{l}{\textbf{Total}} & 0 & 688.8 & - & - & - & - & 0 & 569409.8 & 139 & 11657.6 & 201 & 46720.8 & 1120 & 83479.4 & \textbf{1339} & 157325.4 \\
      \bottomrule
      \end{tabular}%
      }
    \end{table}

    \noindent\textbf{Benchmark.}
      We evaluate \tool{} on GPT-2-based classifiers
      that we trained on the MNIST and SST datasets, where SST models use IBP training to improve verifiability~\citep{gowal2018effectiveness,xu2024training} (\autoref{app:benchmark}).
      We adopted the exact configuration of GPT-2 model~\citep{radford2019language} and utilize its reference implementation from
      HuggingFace~\footnote{Hugging Face is a widely adopted open-source library that provides standardized reference implementations for SOTA transformer models.
      We specifically use their \texttt{transformers} package~\citep{wolf2020transformers}.}
      In total, we generated 2880 instances across networks and perturbation radii (\autoref{tab:eval-models}).
      For MNIST, image rows become tokens, and we perturb pixels within an $\ell_\infty$ ball.
      For SST, we perturb up to three
      token embeddings within an $\ell_\infty$ ball.

    \noindent\textbf{Baselines.}
      We compare \tool{} with \ibp{}~\citep{wang2018formal} (interval), \deepz{}~\citep{singh2018fast} (zonotope), \abcrown{}~\citep{wang2021beta,zhou2025clipandverify} (polytope),
      and \covenn{}~\citep{duong2025compositional} (polytope).
      Among \tool{}'s variants,
      \toolD{} fuses neither blocks (\emph{\underline{D}}ecomposed),
      \toolA{} fuses only \emph{\underline{A}}ttention (\autoref{sec:fused-attn}),
      \toolM{} fuses only the \emph{\underline{M}}LP (\autoref{sec:structured-mlp}), and
      \toolF{} fuses both (\emph{\underline{F}}ull).
      We exclude \textsc{DeepT}~\citep{bonaert2021fast} and \textsc{PBVerifier}~\citep{huang2026parameterized}
      as they do not support GELU and their post-LayerNorm attention residual (BERT-style) differs from our GPT-based target.
      All methods run on an NVIDIA GeForce RTX 4090 (24 GB VRAM), an AMD Ryzen Threadripper PRO 5975WX (128 GB RAM).

  \subsection{RQ1: Overall Verification Performance}
    \label{sec:rq1}

    \autoref{tab:verification-results} reports verified instances for 2880 instances across MNIST and SST benchmarks.
    \tool{} successfully verifies properties across all network depths, with \toolA{} certifying 46.5\% (1339 of 2880) of all instances.
    In contrast, baseline methods verify none, as they either exceed memory limits (\deepz{}, \abcrown{}) or produce loose bounds (\ibp{}, \covenn{}).

    The results validate our block-specific transformations.
    For the attention block, the fused transform provides the precision necessary to scale to the 24-block GPT-2 Medium.
    On models with more than two blocks, \toolA{} verifies 40.1\% of instances (867 of 2160), whereas unfused-attention variants solve fewer than 1\% (\toolD{} solved 13, \toolM{} solved one of 2160).
    For the MLP block, sequential processing yields higher verified instances than block-level fusion, supporting our analysis in \autoref{sec:unfused-mlp}.
    Computing the exact affine and elementwise GELU layers sequentially increases verified instances by 19.6\% (\toolA{}'s 1339 vs. \toolF{}'s 1120).

    The runtimes reflect the computational cost of maintaining precision.
    Baselines such as \ibp{} and \covenn{} run quickly, but these loose abstractions fail to verify any instances.
    In contrast, \tool{} incurs more computational overhead to compute tighter bounds, \eg, \toolA{} accrues a substantially higher total runtime (157325s) compared to \ibp{} (689s).

  \subsection{RQ2: Fused Transformation Effectiveness}
    \label{sec:rq2}

    To isolate the impact of our fused abstractions, we evaluate performance across varying perturbation radii ($\epsilon \in [10^{-5}, 10^{-3}]$), as shown in~\autoref{fig:verification-cactus}.
    While all baselines verify zero instances across all tested radii, the ablation of \tool{} variants highlights the necessity of the fusion.
    Variants lacking a fused attention block (\toolD{} and \toolM{}) verify only 146 and 125 instances at the tightest radius ($\epsilon = 10^{-5}$) and drop to near zero by $\epsilon = 10^{-4}$.

    Preserving the fused attention allows \toolA{} and \toolF{} to scale to much larger perturbations.
    Although both perform similarly at $\epsilon = 10^{-5}$, \toolA{} performance is consistently better than \toolF{} across all radii.
    In particular, at $\epsilon = 10^{-4}$, \toolA{} verifies 305 instances compared to \toolF{}'s 228, and at $\epsilon = 5 \times 10^{-4}$, \toolA{} certifies 68 instances while \toolF{} drops to 9.
    This highlights the importance of the precision of \toolA{}'s sequential MLP bounds, which are critical as the radius grows.

  \subsection{RQ3: Runtime Analysis}
    \label{sec:rq3}

    \begin{figure}[t]
      \centering
      \begin{subfigure}[t]{0.48\linewidth}
        \centering
        \includegraphics[width=\linewidth]{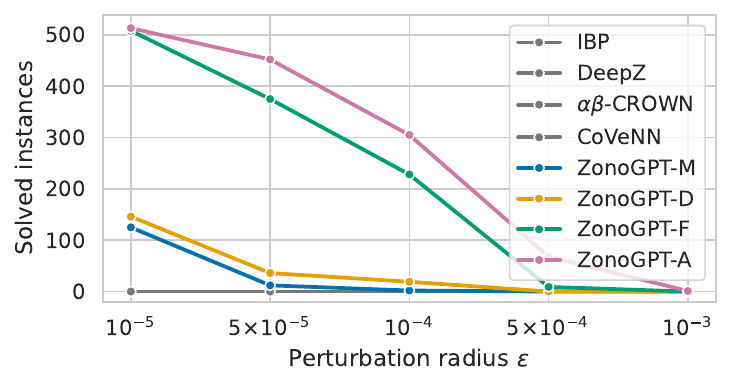}
        \caption{Verified instances by perturbation radius.}
        \label{fig:verification-cactus}
      \end{subfigure}\hfill
      \begin{subfigure}[t]{0.48\linewidth}
        \centering
        \includegraphics[width=\linewidth]{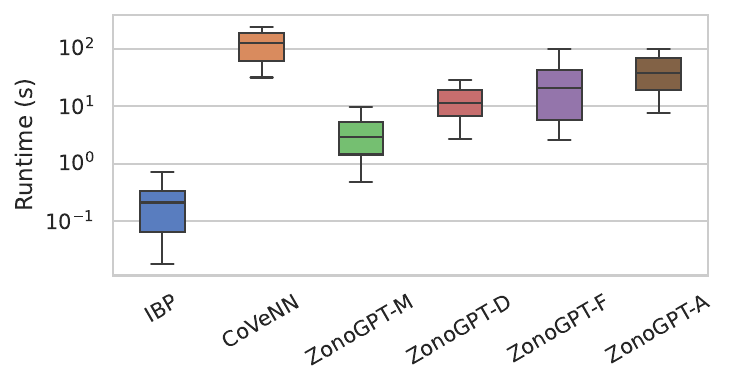}
        \caption{Per-instance runtime (log scale).}
        \label{fig:runtime-boxplot}
      \end{subfigure}
      \caption{Verification and runtime statistics.}
      \label{fig:runtime-comparison}
    \end{figure}

    We evaluate computational efficiency using runtime distributions in~\autoref{fig:runtime-boxplot}.
    \ibp{} finishes instantly (under 1s) by relying on cheap but imprecise interval bounds,
    whereas \covenn{} incurs massive overhead (median $>150$s) due to computationally expensive CROWN analysis within each block.
    Nonetheless, both methods fail to maintain the tight bounds required to scale to deep networks.

    Within the \tool{} variants, runtime predictably scales with abstraction granularity.
    Fusing the MLP block reduces execution time compared to tracking it sequentially,
    making \toolM{} (3s) faster than \toolD{} (11s), and \toolF{} (21s) faster than \toolA{} (38s).
    However, the fused attention abstraction requires slightly more computation to preserve precision,
    \eg, \toolF{}/\toolA{} run slower than \toolM{}/\toolD{}, respectively.
    This is because the fused mechanism for attention must capture interactions across multiple paths
    (including queries, keys, values, and skip connections) into a unified abstract structure.

\section{Conclusion}
  \label{sec:conclusion}
  We presented \tool{}, an abstract domain for verifying large transformers through structured zonotopes and block-specific fused transformations, and successfully verified standard HuggingFace GPT-2 Medium.
  Future work includes jointly over-approximating cached attention states for autoregressive decoding~\citep{pope2023efficiently} and adapting to Diffusion~\citep{peebles2023scalable}.

\bibliographystyle{unsrtnat}
\bibliography{paper}

@inproceedings{katz2017reluplex,
  title={{Reluplex: An efficient SMT solver for verifying deep neural networks}},
  author={Katz, Guy and Barrett, Clark and Dill, David L and Julian, Kyle and Kochenderfer, Mykel J},
  booktitle={International Conference on Computer Aided Verification},
  pages={97--117},
  year={2017},
  organization={Springer},
  doi={10.1007/978-3-319-63387-9_5}
}

@article{singh2018fast,
  title={Fast and effective robustness certification},
  author={Singh, Gagandeep and Gehr, Timon and Mirman, Matthew and P{\"u}schel, Markus and Vechev, Martin},
  journal={Advances in Neural Information Processing Systems},
  volume={31},
  year={2018}
}

@article{singh2019abstract,
  title={An abstract domain for certifying neural networks},
  author={Singh, Gagandeep and Gehr, Timon and P{\"u}schel, Markus and Vechev, Martin},
  journal={Proceedings of the ACM on Programming Languages},
  volume={3},
  number={POPL},
  pages={1--30},
  year={2019},
  publisher={ACM New York, NY, USA}
}

@inproceedings{zhang2018efficient,
  title={Efficient neural network robustness certification with general activation functions},
  author={Zhang, Huan and Weng, Tsui-Wei and Chen, Pin-Yu and Hsieh, Cho-Jui and Daniel, Luca},
  booktitle={Advances in Neural Information Processing Systems},
  volume={31},
  pages={4939--4948},
  year={2018},
  address={Montr{\'e}al, Canada},
  publisher={Curran Associates, Inc.}
}

@article{xu2020automatic,
  title={Automatic perturbation analysis for scalable certified robustness and beyond},
  author={Xu, Kaidi and Shi, Zhouxing and Zhang, Huan and Wang, Yihan and Chang, Kai-Wei and Huang, Minlie and Kailkhura, Bhavya and Lin, Xue and Hsieh, Cho-Jui},
  journal={Advances in Neural Information Processing Systems},
  volume={33},
  pages={1129--1141},
  year={2020}
}

@inproceedings{wang2021beta,
  author = {Wang, Shiqi and Zhang, Huan and Xu, Kaidi and Lin, Xue and Jana, Suman and Hsieh, Cho-Jui and Kolter, J. Zico},
  booktitle = {Advances in Neural Information Processing Systems},
  pages = {29909--29921},
  title = {Beta-CROWN: Efficient Bound Propagation with Per-neuron Split Constraints for Neural Network Robustness Verification},
  volume = {34},
  year = {2021}
}

@inproceedings{duong2025generating,
  title={Generating and Checking DNN Verification Proofs},
  author={Duong, Hai and Nguyen, ThanhVu and Dwyer, Matthew B},
  booktitle={The Thirty-ninth Annual Conference on Neural Information Processing Systems},
  year={2025}
}

@inproceedings{zhou2025clipandverify,
  title={Clip-and-Verify: Linear Constraint-Driven Domain Clipping for Accelerating Neural Network Verification},
  author={Zhou, Duo and Chavez, Jorge and Chen, Hesun and Hanasusanto, Grani A. and Zhang, Huan},
  booktitle={Advances in Neural Information Processing Systems},
  year={2025}
}

@inproceedings{wang2018formal,
  title={Formal security analysis of neural networks using symbolic intervals},
  author={Wang, Shiqi and Pei, Kexin and Whitehouse, Justin and Yang, Junfeng and Jana, Suman},
  booktitle={27th USENIX Security Symposium (USENIX Security 18)},
  pages={1599--1614},
  year={2018},
  url={https://dl.acm.org/doi/10.5555/3277203.3277323}
}

@article{zhang2022general,
  title={General cutting planes for bound-propagation-based neural network verification},
  author={Zhang, Huan and Wang, Shiqi and Xu, Kaidi and Li, Linyi and Li, Bo and Jana, Suman and Hsieh, Cho-Jui and Kolter, J Zico},
  journal={Proceedings of the 36th International Conference on Neural Information Processing Systems},
  year={2022},
  url={https://dl.acm.org/doi/10.5555/3600270.3600391}
}

@inproceedings{cousot1977abstract,
  title={Abstract interpretation: a unified lattice model for static analysis of programs by construction or approximation of fixpoints},
  author={Cousot, Patrick and Cousot, Radhia},
  booktitle={Proceedings of the 4th ACM SIGACT-SIGPLAN symposium on Principles of programming languages},
  pages={238--252},
  year={1977}
}

@inproceedings{ferrari2022complete,
  title={{Complete Verification via Multi-Neuron Relaxation Guided Branch-and-Bound}},
  author={Ferrari, Claudio and Mueller, Mark Niklas and Jovanovi{\'c}, Nikola and Vechev, Martin},
  booktitle={International Conference on Learning Representations},
  year={2022},
  doi={10.48550/arXiv.2205.00263}
}

@article{duong2024harnessing,
    author = {Duong, Hai and Xu, Dong and Nguyen, Thanhvu and Dwyer, Matthew B.},
    title = {Harnessing Neuron Stability to Improve DNN Verification},
    year = {2024},
    publisher = {Association for Computing Machinery},
    address = {New York, NY, USA},
    volume = {1},
    number = {FSE},
    doi = {10.1145/3643765},
    journal = {Proc. ACM Softw. Eng.},
    articleno = {39},
    numpages = {23},
}

@inproceedings{duong2025neuralsat,
  title={NeuralSAT: A High-Performance Verification Tool for Deep Neural Networks},
  author={Duong, Hai and Nguyen, ThanhVu and Dwyer, Matthew B},
  booktitle={International Conference on Computer Aided Verification},
  pages={409--423},
  year={2025},
  organization={Springer}
}

@article{zhou2024scalable,
  title={Scalable Neural Network Verification with Branch-and-bound Inferred Cutting Planes},
  author={Zhou, Duo and Brix, Christopher and Hanasusanto, Grani A and Zhang, Huan},
  journal={arXiv preprint arXiv:2501.00200},
  year={2024}
}

@inproceedings{bak2021nnenum,
  title={{nnenum: Verification of ReLU Neural Networks with Optimized Abstraction Refinement}},
  author={Bak, Stanley},
  booktitle={NASA Formal Methods Symposium},
  pages={19--36},
  year={2021},
  organization={Springer},
  doi={10.1007/978-3-030-76384-8_2}
}

@inproceedings{wu2024marabou,
  title={Marabou 2.0: a versatile formal analyzer of neural networks},
  author={Wu, Haoze and Isac, Omri and Zelji{\'c}, Aleksandar and Tagomori, Teruhiro and Daggitt, Matthew and Kokke, Wen and Refaeli, Idan and Amir, Guy and Julian, Kyle and Bassan, Shahaf and others},
  booktitle={International Conference on Computer Aided Verification},
  pages={249--264},
  year={2024},
  organization={Springer}
}

@article{bunel2020branch,
  title={Branch and bound for piecewise linear neural network verification},
  author={Bunel, Rudy and Mudigonda, P and Turkaslan, Ilker and Torr, P and Lu, Jingyue and Kohli, Pushmeet},
  journal={Journal of Machine Learning Research},
  volume={21},
  number={2020},
  year={2020},
  publisher={Journal of Machine Learning Research},
  url={https://dl.acm.org/doi/10.5555/3455716.3455758}
}

@misc{kaulen20256thinternationalverificationneural,
      title={The 6th International Verification of Neural Networks Competition (VNN-COMP 2025): Summary and Results},
      author={Konstantin Kaulen and Tobias Ladner and Stanley Bak and Christopher Brix and Hai Duong and Thomas Flinkow and Taylor T. Johnson and Lukas Koller and Edoardo Manino and ThanhVu H Nguyen and Haoze Wu},
      year={2025},
      eprint={2512.19007},
      archivePrefix={arXiv},
      primaryClass={cs.LG},
      url={https://arxiv.org/abs/2512.19007},
}

@inproceedings{katz2019marabou,
  title={The marabou framework for verification and analysis of deep neural networks},
  author={Katz, Guy and Huang, Derek A and Ibeling, Duligur and Julian, Kyle and Lazarus, Christopher and Lim, Rachel and Shah, Parth and Thakoor, Shantanu and Wu, Haoze and Zelji{\'c}, Aleksandar and others},
  booktitle={International Conference on Computer Aided Verification},
  pages={443--452},
  year={2019},
  organization={Springer},
  doi={10.1007/978-3-030-25540-4_26}
}

@article{katz2022reluplex,
  title={Reluplex: a calculus for reasoning about deep neural networks},
  author={Katz, Guy and Barrett, Clark and Dill, David L and Julian, Kyle and Kochenderfer, Mykel J},
  journal={Formal Methods in System Design},
  volume={60},
  number={1},
  pages={87--116},
  year={2022},
  publisher={Springer},
  doi={10.1007/s10703-021-00363-7}
}

@inproceedings{duong2025neuralsat2,
  title={Neuralsat: Scaling constraint solving for dnn verification (competition contribution)},
  author={Duong, Hai and Nguyen, ThanhVu},
  booktitle={International Symposium on AI Verification},
  pages={253--259},
  year={2025},
  organization={Springer}
}

@inproceedings{duong2025compositional,
  title={Compositional neural network verification via assume-guarantee reasoning},
  author={Duong, Hai and Shriver, David and Nguyen, ThanhVu and Dwyer, Matthew B},
  booktitle={The Thirty-ninth Annual Conference on Neural Information Processing Systems},
  year={2025}
}

@article{duong2026verifying,
  title={{Verifying Structural Robustness of Deep Neural Network}},
  author = {Duong, Hai and Le, Thanh and Nguyen, Lam and Nguyen, ThanhVu},
  journal={Proceedings of the ACM on Software Engineering},
  volume={3},
  number={FSE},
  year={2026}
}

@inproceedings{duong2026verifying2,
  title={Verifying Neural Network Robustness with Dual Perturbations},
  author={Duong, Hai and Nguyen, Lam and Le, Thanh and Nguyen, ThanhVu},
  booktitle={Proceedings of the IEEE/CVF Conference on Computer Vision and Pattern Recognition},
  pages={27916--27925},
  year={2026}
}

@article{shi2020robustness,
  title={Robustness verification for transformers},
  author={Shi, Zhouxing and Zhang, Huan and Chang, Kai-Wei and Huang, Minlie and Hsieh, Cho-Jui},
  journal={arXiv preprint arXiv:2002.06622},
  year={2020}
}

@article{yang2025qwen3,
  title={Qwen3 technical report},
  author={Yang, An and Li, Anfeng and Yang, Baosong and Zhang, Beichen and Hui, Binyuan and Zheng, Bo and Yu, Bowen and Gao, Chang and Huang, Chengen and Lv, Chenxu and others},
  journal={arXiv preprint arXiv:2505.09388},
  year={2025}
}

@inproceedings{huang2024position,
  title={Position: Trustllm: Trustworthiness in large language models},
  author={Huang, Yue and Sun, Lichao and Wang, Haoran and Wu, Siyuan and Zhang, Qihui and Li, Yuan and Gao, Chujie and Huang, Yixin and Lyu, Wenhan and Zhang, Yixuan and others},
  booktitle={International Conference on Machine Learning},
  pages={20166--20270},
  year={2024},
  organization={PMLR}
}

@inproceedings{tran2019star,
  title={Star-based reachability analysis of deep neural networks},
  author={Tran, Hoang-Dung and Manzanas Lopez, Diago and Musau, Patrick and Yang, Xiaodong and Nguyen, Luan Viet and Xiang, Weiming and Johnson, Taylor T},
  booktitle={International symposium on formal methods},
  pages={670--686},
  year={2019},
  organization={Springer}
}

@article{pope2023efficiently,
  title={Efficiently scaling transformer inference},
  author={Pope, Reiner and Douglas, Sholto and Chowdhery, Aakanksha and Devlin, Jacob and Bradbury, James and Heek, Jonathan and Xiao, Kefan and Agrawal, Shivani and Dean, Jeff},
  journal={Proceedings of machine learning and systems},
  volume={5},
  pages={606--624},
  year={2023}
}

@inproceedings{peebles2023scalable,
  title={Scalable diffusion models with transformers},
  author={Peebles, William and Xie, Saining},
  booktitle={2023 IEEE/CVF International Conference on Computer Vision (ICCV)},
  pages={4172--4182},
  year={2023},
  organization={IEEE}
}

@article{vaswani2017attention,
  title={Attention is all you need},
  author={Vaswani, Ashish and Shazeer, Noam and Parmar, Niki and Uszkoreit, Jakob and Jones, Llion and Gomez, Aidan N and Kaiser, {\L}ukasz and Polosukhin, Illia},
  journal={Advances in neural information processing systems},
  volume={30},
  year={2017}
}

@article{radford2019language,
  title={Language models are unsupervised multitask learners},
  author={Radford, Alec and Wu, Jeffrey and Child, Rewon and Luan, David and Amodei, Dario and Sutskever, Ilya and others},
  journal={OpenAI blog},
  volume={1},
  number={8},
  pages={9},
  year={2019}
}

@article{zou2023universal,
  title={Universal and transferable adversarial attacks on aligned language models},
  author={Zou, Andy and Wang, Zifan and Carlini, Nicholas and Nasr, Milad and Kolter, J Zico and Fredrikson, Matt},
  journal={arXiv preprint arXiv:2307.15043},
  year={2023}
}

@inproceedings{bonaert2021fast,
  title={Fast and precise certification of transformers},
  author={Bonaert, Gregory and Dimitrov, Dimitar I and Baader, Maximilian and Vechev, Martin},
  booktitle={Proceedings of the 42nd ACM SIGPLAN international conference on programming language design and implementation},
  pages={466--481},
  year={2021}
}

@inproceedings{wei2023convex,
  title={Convex bounds on the softmax function with applications to robustness verification},
  author={Wei, Dennis and Wu, Haoze and Wu, Min and Chen, Pin-Yu and Barrett, Clark and Farchi, Eitan},
  booktitle={International Conference on Artificial Intelligence and Statistics},
  pages={6853--6878},
  year={2023},
  organization={PMLR}
}

@inproceedings{zhang2024galileo,
  title={Galileo: General linear relaxation framework for tightening robustness certification of transformers},
  author={Zhang, Yunruo and Shen, Lujia and Guo, Shanqing and Ji, Shouling},
  booktitle={Proceedings of the AAAI Conference on Artificial Intelligence},
  volume={38},
  number={19},
  pages={21797--21805},
  year={2024}
}

@inproceedings{huang2026parameterized,
  title={Parameterized abstract interpretation for transformer verification},
  author={Huang, Pei and Wei, Dennis and Isac, Omri and Wu, Haoze and Wu, Min and Barrett, Clark},
  booktitle={Proceedings of the AAAI Conference on Artificial Intelligence},
  volume={40},
  number={42},
  pages={35500--35508},
  year={2026}
}

@article{liu2024deepseek,
  title={Deepseek-v3 technical report},
  author={Liu, Aixin and Feng, Bei and Xue, Bing and Wang, Bingxuan and Wu, Bochao and Lu, Chengda and Zhao, Chenggang and Deng, Chengqi and Zhang, Chenyu and Ruan, Chong and others},
  journal={arXiv preprint arXiv:2412.19437},
  year={2024}
}

@article{comanici2025gemini,
  title={Gemini 2.5: Pushing the frontier with advanced reasoning, multimodality, long context, and next generation agentic capabilities},
  author={Comanici, Gheorghe and Bieber, Eric and Schaekermann, Mike and Pasupat, Ice and Sachdeva, Noveen and Dhillon, Inderjit and Blistein, Marcel and Ram, Ori and Zhang, Dan and Rosen, Evan and others},
  journal={arXiv preprint arXiv:2507.06261},
  year={2025}
}

@inproceedings{chu2025jailbreakradar,
  title={JailbreakRadar: Comprehensive assessment of jailbreak attacks against LLMs},
  author={Chu, Junjie and Liu, Yugeng and Yang, Ziqing and Shen, Xinyue and Backes, Michael and Zhang, Yang},
  booktitle={Proceedings of the 63rd Annual Meeting of the Association for Computational Linguistics (Volume 1: Long Papers)},
  pages={21538--21566},
  year={2025}
}

@article{zhang2025evaluating,
  title={Evaluating and improving robustness in large language models: a survey and future directions},
  author={Zhang, Kun and Wu, Le and Yu, Kui and Lv, Guangyi and Zhang, Dacao},
  journal={arXiv preprint arXiv:2506.11111},
  year={2025}
}

@inproceedings{joo2025harmful,
  title={Harmful prompt laundering: Jailbreaking LLMs with abductive styles and symbolic encoding},
  author={Joo, Seongho and Koh, Hyukhun and Jung, Kyomin},
  booktitle={Proceedings of the 2025 Conference on Empirical Methods in Natural Language Processing},
  pages={25500--25535},
  year={2025}
}

@article{wang2023decodingtrust,
  title={Decodingtrust: A comprehensive assessment of trustworthiness in $\{$GPT$\}$ models},
  author={Wang, Boxin and Chen, Weixin and Pei, Hengzhi and Xie, Chulin and Kang, Mintong and Zhang, Chenhui and Xu, Chejian and Xiong, Zidi and Dutta, Ritik and Schaeffer, Rylan and others},
  year={2023},
  publisher={Neural Information Processing Systems Datasets; Benchmarks Track}
}

@article{gallegos2024bias,
  title={Bias and fairness in large language models: A survey},
  author={Gallegos, Isabel O and Rossi, Ryan A and Barrow, Joe and Tanjim, Md Mehrab and Kim, Sungchul and Dernoncourt, Franck and Yu, Tong and Zhang, Ruiyi and Ahmed, Nesreen K},
  journal={Computational linguistics},
  volume={50},
  number={3},
  pages={1097--1179},
  year={2024},
  publisher={MIT Press 255 Main Street, 9th Floor, Cambridge, Massachusetts 02142, USA~…}
}

@inproceedings{jia2019certified,
  title={Certified robustness to adversarial word substitutions},
  author={Jia, Robin and Raghunathan, Aditi and G{\"o}ksel, Kerem and Liang, Percy},
  booktitle={Proceedings of the 2019 conference on empirical methods in natural language processing and the 9th international joint conference on natural language processing (EMNLP-IJCNLP)},
  pages={4129--4142},
  year={2019}
}

@article{zhu2024promptbench,
  title={Promptbench: A unified library for evaluation of large language models},
  author={Zhu, Kaijie and Zhao, Qinlin and Chen, Hao and Wang, Jindong and Xie, Xing},
  journal={Journal of Machine Learning Research},
  volume={25},
  number={254},
  pages={1--22},
  year={2024}
}

@inproceedings{wolf2020transformers,
  title={Transformers: State-of-the-art natural language processing},
  author={Wolf, Thomas and Debut, Lysandre and Sanh, Victor and Chaumond, Julien and Delangue, Clement and Moi, Anthony and Cistac, Pierric and Rault, Tim and Louf, R{\'e}mi and Funtowicz, Morgan and others},
  booktitle={Proceedings of the 2020 conference on empirical methods in natural language processing: system demonstrations},
  pages={38--45},
  year={2020}
}

@inproceedings{xu2024anllm,
 author = {Xu, Xilie and Kong, Keyi and Liu, Ning and Cui, Lizhen and Wang, Di and Zhang, Jingfeng and Kankanhalli, Mohan},
 booktitle = {International Conference on Learning Representations},
 editor = {B. Kim and Y. Yue and S. Chaudhuri and K. Fragkiadaki and M. Khan and Y. Sun},
 pages = {2900--2922},
 title = {An LLM can Fool Itself: A Prompt-Based Adversarial Attack},
 volume = {2024},
 year = {2024}
}

@article{duong2023dpll,
  title={A dpll (t) framework for verifying deep neural networks},
  author={Duong, Hai and Nguyen, ThanhVu and Dwyer, Matthew},
  journal={arXiv preprint arXiv:2307.10266},
  year={2023}
}

@inproceedings{xu2024training,
  title={Training for verification: Increasing neuron stability to scale dnn verification},
  author={Xu, Dong and Mozumder, Nusrat Jahan and Duong, Hai and Dwyer, Matthew B},
  booktitle={International Conference on Tools and Algorithms for the Construction and Analysis of Systems},
  pages={24--44},
  year={2024},
  organization={Springer}
}

@article{duong2026verifying3,
  title={Verifying Neural Networks with Reinforcement Learning},
  author={Duong, Hai and Thanh, Le and Nguyen, ThanhVu},
  journal={accepted at Advances in Neural Information Processing Systems},
  year={2026}
}

@article{gowal2018effectiveness,
  title={On the Effectiveness of Interval Bound Propagation for Training Verifiably Robust Models},
  author={Gowal, Sven and Dvijotham, Krishnamurthy and Stanforth, Robert and Bunel, Rudy and Qin, Chongli and Uesato, Jonathan and Arandjelovic, Relja and Mann, Timothy and Kohli, Pushmeet},
  journal={arXiv preprint arXiv:1810.12715},
  year={2018}
}

\appendix

\section{GPT Transformations}

\subsection{Fused LayerNorm Transform}
  \label{app:layernorm}

  \begin{theorem}[LayerNorm soundness]
    \label{thm:layernorm-soundness}
    For $\Pi=I-\mathbf1\mathbf1^\top/d$ and $\tau>0$, LayerNorm is
    \begin{equation}
      \operatorname{LN}(x)
      =\gamma\odot\frac{\Pi x}{\sqrt{\tau+\|\Pi x\|_2^2/d}}+\beta.
    \end{equation}
    For $Z=\mathcal Z(c,G,L,0)$, let $H_j$ range over its shared and embedded local generators.
    \tool{} computes nonnegative vectors $r_N$ and $R_N$ such that every $x\in Z$ satisfies
    \begin{equation}
      |J_{\rm LN}(c)(x-c)|\le r_N,
      \qquad
      |\operatorname{LN}(x)-\operatorname{LN}(c)-J_{\rm LN}(c)(x-c)|\le R_N.
      \label{eq:app-ln-required-bounds}
    \end{equation}
    The resulting zonotope
    \begin{equation}
      Z_N=\mathcal Z\!\left(
      \operatorname{LN}(c),
      \{J_{\rm LN}(c)G_j\}_j,
      \{J_{\rm LN}(c_s)L_s\}_{s=1}^{S},
      R_N\right)
      \label{eq:app-ln-sound-output}
    \end{equation}
    satisfies $\operatorname{LN}(Z)\subseteq Z_N$.
  \end{theorem}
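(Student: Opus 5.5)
The proof reduces to the token-wise structure of LayerNorm together with the remainder bound in \autoref{eq:app-ln-required-bounds}. LayerNorm acts independently on each token: $\operatorname{LN}(x)_s=\operatorname{LN}(x_s)$. Hence $J_{\rm LN}(c)$ is block diagonal with blocks $J_{\rm LN}(c_s)$. Fix $x\in Z$. By \autoref{def:method-domain} with $b=0$ there exist $|\epsilon_j|\le1$ and $\|\epsilon^{\rm loc}_s\|_\infty\le1$ such that at token $s$ we have $x_s=c_s+\sum_j(G_j)_s\epsilon_j+L_s\epsilon^{\rm loc}_s$, exactly as in \autoref{eq:method-ln-input}. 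I would write $v=x-c=\sum_j G_j\epsilon_j+\sum_s\operatorname{emb}_s(L_s\epsilon^{\rm loc}_s)$, which matches the notation $H_j$ over shared and embedded local generators in the statement.

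The first step applies linearity and block-diagonality of the Jacobian. This gives $J_{\rm LN}(c)v=\sum_j (J_{\rm LN}(c)G_j)\epsilon_j+\sum_s\operatorname{emb}_s(J_{\rm LN}(c_s)L_s\epsilon^{\rm loc}_s)$. The point is that the image of an embedded local generator stays supported on token $s$, because the off-diagonal blocks vanish. Consequently the images of the local generators are exactly the local generators $J_{\rm LN}(c_s)L_s$ of $Z_N$, and no cross-token leakage term is needed. This is unlike the attention case, where $b_A$ had to absorb such a term.

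The second step is the decomposition $\operatorname{LN}(x)=\operatorname{LN}(c)+J_{\rm LN}(c)v+e$, where $e=\operatorname{LN}(x)-\operatorname{LN}(c)-J_{\rm LN}(c)(x-c)$. This is the expansion in \autoref{eq:method-first-order-taylor}, written now without the residual. By the second inequality of \autoref{eq:app-ln-required-bounds}, $|e|\le R_N$. Setting $e$ as the interval term of \autoref{def:method-domain} with radius $R_N$ and reusing the same $\epsilon,\epsilon^{\rm loc}$ exhibits $\operatorname{LN}(x)\in Z_N$. Since $x$ was arbitrary, $\operatorname{LN}(Z)\subseteq Z_N$. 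The bound $r_N$ is not needed for containment; I would only remark that it feeds the construction of $R_N$.

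The main obstacle is proving that the computed $R_N$ actually satisfies the remainder inequality. Soundness of the theorem is conditional on it, but the statement claims that \tool{} computes such vectors. For that I would use the integral remainder $e=\int_0^1(1-t)\,\frac{d^2}{dt^2}\operatorname{LN}(c+tv)\,dt$. Write $\operatorname{LN}(x)=\gamma\odot\Pi x\,\sigma(x)^{-1}+\beta$ with $\sigma=\sqrt{\tau+\|\Pi x\|^2/d}$. Differentiating twice along the segment gives terms of the form $\Pi v\cdot\frac{d}{dt}\sigma^{-1}$ and $\Pi x\cdot\frac{d^2}{dt^2}\sigma^{-1}$.

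I would bound these terms with interval arithmetic over the token's box. First bound $\|\Pi v\|_2^2$ above using the radius from \autoref{eq:method-radius} at token $s$. Bound $\sigma$ below by $\sqrt\tau$, or by something tighter from a lower bound on $\|\Pi x\|^2$. Bound $|\langle\Pi x,\Pi v\rangle|$ via $r_N$-type first-order quantities. Then take $R_N$ to be half of the resulting supremum of the second derivative, multiplied elementwise by $|\gamma|$. Convexity of the segment inside $Z$ guarantees that all points $c+tv$ lie in $Z$, so the interval bounds apply uniformly in $t$.
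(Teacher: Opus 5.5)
Your containment argument is the paper's final step: tokenwise action makes $J_{\rm LN}(c)$ block diagonal, so shared generators map to $J_{\rm LN}(c)G_j$, local ones stay at their token as $J_{\rm LN}(c_s)L_s$, and only the remainder enters the interval. Where you genuinely differ is the remainder bound.

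You bound $\tfrac12\sup_t\bigl|\tfrac{d^2}{dt^2}\operatorname{LN}(c+tv)\bigr|$ for the whole vector map. The paper instead uses the exact identity $\operatorname{LN}(c+\Delta)-\operatorname{LN}(c)-J_{\rm LN}(c)\Delta=\gamma\odot(z_0e_\rho+\bar\Delta d_\rho)$. Here $z_0=\Pi c$, $\bar\Delta=\Pi\Delta$, $\rho(a)=a^{-1/2}$, $a_0=\tau+\|z_0\|_2^2/d$, $\delta a=\delta a_{\rm lin}+\delta a_{\rm quad}$, $e_\rho=\rho(a_0+\delta a)-\rho(a_0)-\rho'(a_0)\delta a_{\rm lin}$ and $d_\rho=\rho(a_0+\delta a)-\rho(a_0)$. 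This reduces everything to a scalar second-order Taylor bound and a mean-value bound for $\rho$. These bounds are fed by generator-wise radii such as $r_{\rm cen}=\sum_j|\Pi H_j|$ and $r_{\rm lin}=\sum_j|2z_0^\top\Pi H_j/d|$.

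Your route is more generic, since it works for any $C^2$ tokenwise map. It is sound once the interval bounds are filled in, because $\tau>0$ keeps the map smooth and the segment stays in the convex set $Z$. The paper's decomposition buys tightness in two places. First, the $\rho'(a_0)\delta a_{\rm quad}$ part of $e_\rho$ is evaluated exactly at the center rather than at a worst-case point. Second, the second-order variation of $\rho$ is multiplied only by $|z_0|$, whereas your $\Pi x\cdot\frac{d^2}{dt^2}\sigma^{-1}$ term also charges it against $|t\bar v|$.

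The decisive ingredient is the denominator. Your fallback $\sigma\ge\sqrt\tau$ is valid, but with GPT-2's $\tau=10^{-5}$ it makes $a^{-5/2}$ of order $10^{12}$, so $R_N$ would be vacuous in practice. The ``something tighter'' you allude to is exactly the paper's $a_{\min}=\tau+\max\{(\|z_0\|_2^2/d-r_{\rm lin})_+,(\|z_0\|_2-r_\parallel)_+^2/d\}$, with $r_\parallel=\sum_j|\widehat z^\top\Pi H_j|$ and $\widehat z=z_0/\|z_0\|_2$. It lower-bounds $\tau+\|z_0+t\bar\Delta\|_2^2/d$ for all $t\in[0,1]$.

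Finally, do not dismiss $r_N$. The statement asserts the first inequality, which follows from your step-one expansion with $r_N=\sum_j|J_{\rm LN}(c)H_j|$. In the paper $r_N$ is not an input to $R_N$; the quantity controlling $\langle\Pi x,\Pi v\rangle$ is $r_{\rm lin}$, not an $r_N$-type bound. Instead, $r_N$ is consumed downstream by the attention and MLP bounds.
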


  \begin{proof}
    LayerNorm acts independently on each token, so the proof considers one token.
    For $x\in Z$, its change from the center and the corresponding centered change are
    \begin{equation}
      \Delta=x-c=\sum_jH_j\epsilon_j,
      \qquad
      \bar\Delta=\Pi\Delta=\sum_j\underbrace{\Pi H_j}_{\bar H_j}\epsilon_j.
      \label{eq:app-ln-input-change}
    \end{equation}
    The centered input and its squared norm at $c$ are
    \begin{equation}
      z_0=\Pi c,
      \qquad
      a_0=\tau+\|z_0\|_2^2/d.
    \end{equation}
    Consequently, LayerNorm at $x=c+\Delta$ is
    \begin{equation}
      \operatorname{LN}(c+\Delta)
      =\gamma\odot(z_0+\bar\Delta)
      \rho(a_0+\delta a_{\rm lin}+\delta a_{\rm quad})+\beta,
      \label{eq:app-ln-expanded}
    \end{equation}
    where
    \begin{equation}
      \rho(a)=a^{-1/2},
      \qquad
      \delta a_{\rm lin}=2z_0^\top\bar\Delta/d,
      \qquad
      \delta a_{\rm quad}=\|\bar\Delta\|_2^2/d.
      \label{eq:app-ln-norm-change}
    \end{equation}

    \paragraph{Linear change computation.}
    Differentiating \autoref{eq:app-ln-expanded} at $\Delta=0$ gives
    \begin{equation}
      J_{\rm LN}(c)\Delta
      =\gamma\odot\left(
      \bar\Delta\rho(a_0)
      +z_0\rho'(a_0)\delta a_{\rm lin}
      \right).
      \label{eq:app-ln-jacobian}
    \end{equation}
    Because the expression is linear in $\Delta$, substituting \autoref{eq:app-ln-input-change} into \autoref{eq:app-ln-jacobian} gives
    \begin{equation}
      J_{\rm LN}(c)\Delta
      =\sum_jJ_{\rm LN}(c)H_j\epsilon_j.
    \end{equation}
    Thus \tool{} computes
    \begin{equation}
      |J_{\rm LN}(c)\Delta| = |J_{\rm LN}(c)(x - c)| \le r_N,\qquad \qquad
      r_N=\sum_j|J_{\rm LN}(c)H_j|,
      \qquad
      \label{eq:app-ln-linear-bound}
    \end{equation}
    where the inequality follows from $|\epsilon_j|\le1$ and the triangle inequality.

    \paragraph{Nonlinear error computation.}
    Subtracting $\operatorname{LN}(c)$ and $J_{\rm LN}(c)\Delta$ from \autoref{eq:app-ln-expanded} gives exactly
    \begin{equation}
      \operatorname{LN}(c+\Delta)-\operatorname{LN}(c)-J_{\rm LN}(c)\Delta
      =\gamma\odot\left(z_0e_\rho+\bar\Delta d_\rho\right),
      \label{eq:app-ln-error-decomposition}
    \end{equation}
    where
    \begin{equation}
      \begin{aligned}
        e_\rho
        &=\rho(a_0+\delta a_{\rm lin}+\delta a_{\rm quad})
        -\rho(a_0)-\rho'(a_0)\delta a_{\rm lin},\\
        d_\rho
        &=\rho(a_0+\delta a_{\rm lin}+\delta a_{\rm quad})-\rho(a_0).
      \end{aligned}
      \label{eq:app-ln-scalar-errors}
    \end{equation}

    The input generators bound the quantities in \autoref{eq:app-ln-scalar-errors} by
    \begin{equation}
      \begin{aligned}
        r_{\rm cen}&=\sum_j|\bar H_j|,
        &|\bar\Delta|&\le r_{\rm cen},\\
        r_{\rm lin}&=\sum_j|2z_0^\top\bar H_j/d|,
        &|\delta a_{\rm lin}|&\le r_{\rm lin},\\
        r_{\rm quad}&=\|r_{\rm cen}\|_2^2/d,
        &0\le\delta a_{\rm quad}&\le r_{\rm quad},\\
        r_a&=r_{\rm lin}+r_{\rm quad},
        &|\delta a_{\rm lin}+\delta a_{\rm quad}|&\le r_a.
      \end{aligned}
      \label{eq:app-ln-generator-bounds}
    \end{equation}
    Taylor's theorem also requires a positive lower bound for the argument of $\rho$.
    For $\widehat z=z_0/\|z_0\|_2$ when $z_0\ne0$ and $\widehat z=0$ otherwise, the generator bounds give
    \begin{equation}
      \begin{aligned}
        r_{\parallel}&=\sum_j|\widehat z^\top\bar H_j|,\\
        a_{\min}&=\tau+\max\left\{
        \max(0,\|z_0\|_2^2/d-r_{\rm lin}),
        (\|z_0\|_2-r_{\parallel})_+^2/d
        \right\}.
      \end{aligned}
      \label{eq:app-ln-denominator-lower-bound}
    \end{equation}
    This construction ensures
    \begin{equation}
      \tau+\|z_0+t\bar\Delta\|_2^2/d\ge a_{\min}
      \qquad\text{for every }t\in[0,1].
    \end{equation}

    Using $\rho'(a)=-\tfrac12a^{-3/2}$ and $\rho''(a)=\tfrac34a^{-5/2}$, Taylor's theorem and the mean value theorem yield
    \begin{equation}
      \begin{aligned}
        |e_\rho|
        &\le\tfrac12a_0^{-3/2}r_{\rm quad}
        +\tfrac38a_{\min}^{-5/2}r_a^2,\\
        |d_\rho|
        &\le\tfrac12a_{\min}^{-3/2}r_a.
      \end{aligned}
      \label{eq:app-ln-scalar-error-bounds}
    \end{equation}
    Applying these inequalities and $|\bar\Delta|\le r_{\rm cen}$ to \autoref{eq:app-ln-error-decomposition} gives
    \begin{equation}
      R_N=|\gamma|\odot\left[
      |z_0|\left(
      \tfrac12a_0^{-3/2}r_{\rm quad}
      +\tfrac38a_{\min}^{-5/2}r_a^2
      \right)
      +\tfrac12r_{\rm cen}a_{\min}^{-3/2}r_a
      \right].
      \label{eq:method-ln-radius}
    \end{equation}
    Therefore, \autoref{eq:app-ln-linear-bound} gives the first inequality in \autoref{eq:app-ln-required-bounds}, and \autoref{eq:method-ln-radius} gives the second.
    For every $x\in Z=\mathcal Z(c,G,L,0)$, \autoref{eq:app-ln-required-bounds} gives
    \begin{equation}
      \begin{aligned}
        \operatorname{LN}(x)
        &=\operatorname{LN}(c)
        +\sum_jJ_{\rm LN}(c)G_j\epsilon_j\\
        &\quad+\sum_{s=1}^{S}\operatorname{emb}_s\!\left(
        J_{\rm LN}(c_s)L_s\epsilon_s^{\rm loc}\right)+e_N,
        \qquad |e_N|\le R_N
      \end{aligned}
      \label{eq:app-ln-output-expansion}
    \end{equation}
    Therefore,
    \begin{equation}
      \operatorname{LN}(Z)\subseteq
      Z_N=\mathcal Z\!\left(
      \operatorname{LN}(c),
      \{J_{\rm LN}(c)G_j\}_j,
      \{J_{\rm LN}(c_s)L_s\}_{s=1}^{S},
      R_N\right)
      \label{eq:app-ln-output-zonotope}
    \end{equation}
    The Jacobian images retain every input symbol, while only the nonlinear error $e_N$ enters the interval.
  \end{proof}

  \begin{remark}[Lean mechanization of LayerNorm soundness]
    \label{rem:lean-layernorm}
    \autoref{thm:layernorm-soundness} is fully mechanized in Lean~4 as \texttt{layerNorm\_image\_subset\_lnZN} in \texttt{ZonoGpt/LayerNorm.lean}, whose only hypothesis is $\tau>0$.
    The mechanization follows the proof above.
    It first shows that the closed-form Jacobian $J_{\rm LN}(c)$ is the Fréchet derivative of LayerNorm (\texttt{ZonoGpt/LayerNormDeriv.lean}).
    For a single token, it then bounds the first-order change by $r_N$ and the Taylor remainder by $R_N$, using second-order Taylor and mean-value bounds for $a\mapsto a^{-1/2}$ (\texttt{ZonoGpt/Rho.lean}, \texttt{ZonoGpt/Calculus.lean}); the hypothesis $\tau>0$ serves only to guarantee $a_{\min}>0$.
    Lifting these bounds to all tokens yields the symbol-preserving expansion \autoref{eq:app-ln-output-expansion} (\texttt{layerNorm\_szono\_expansion}), from which the inclusion $\operatorname{LN}(Z)\subseteq Z_N$ follows directly.
    The proofs proceed step by step, mainly using the tactics \texttt{have}, \texttt{rw}, \texttt{simp}, and \texttt{ring}, with \texttt{linarith} and \texttt{positivity} closing the inequalities. They contain no \texttt{sorry} and introduce no axioms.
  \end{remark}

\subsection{Affine GELU Transform}
  \label{app:gelu-affine}

  The MLP requires bounds on tanh-GELU
  \begin{equation}
    g(t)=\tfrac12t\left(1+\tanh h(t)\right),\qquad
    h(t)=\sqrt{2/\pi}\,(t+0.044715t^3)
    \label{eq:app-gelu-definition}
  \end{equation}
  over each preactivation interval $[\ell,u]$.
  \tool{} bounds $g$ on $[\ell,u]$ by a linear lower bound $at+r^{\rm lb}$ and a linear upper bound $at+r^{\rm ub}$ that share the coefficient $a$, where $r^{\rm lb}$ and $r^{\rm ub}$ bound the residual $q(t)=g(t)-at$ (\autoref{eq:app-gelu-residual-bound}).
  These bounds hold for every $a$; for $\ell<u$, \tool{} uses
  \begin{equation}
    a=\frac{g(u)-g(\ell)}{u-\ell}.
    \label{eq:app-gelu-slope}
  \end{equation}

  The second derivative of $g$ satisfies
  \begin{equation}
    |g''(t)|
    \le \operatorname{sech}^2(h(t))
    \left(
    |h'(t)|+|t|\left(|h'(t)|^2+\tfrac12|h''(t)|\right)
    \right),
    \label{eq:app-gelu-curvature}
  \end{equation}
  where $h'(t)=\sqrt{2/\pi}\,(1+0.134145t^2)$ and $h''(t)=0.26829\sqrt{2/\pi}\,t$.
  Partition $[\ell,u]$ into $\ell=t_0<\cdots<t_K=u$, and let $C_i$ bound the right-hand side of \autoref{eq:app-gelu-curvature} on $[t_i,t_{i+1}]$.
  Linear interpolation of $q$ on this interval gives
  \begin{equation}
    \begin{aligned}
      q_i^{\rm lb}&=\min\{q(t_i),q(t_{i+1})\}
      -\frac{C_i(t_{i+1}-t_i)^2}{8},\\
      q_i^{\rm ub}&=\max\{q(t_i),q(t_{i+1})\}
      +\frac{C_i(t_{i+1}-t_i)^2}{8}
    \end{aligned}
    \label{eq:app-gelu-partition-bound}
  \end{equation}
  and therefore $q(t)\in[r_{\rm part}^{\rm lb},r_{\rm part}^{\rm ub}]$, where
  \begin{equation}
    r_{\rm part}^{\rm lb}=\min_i q_i^{\rm lb},\qquad
    r_{\rm part}^{\rm ub}=\max_i q_i^{\rm ub}
    \label{eq:app-gelu-partition-union}
  \end{equation}

  \tool{} also uses the global relation $g(t)=\operatorname{ReLU}(t)-e_{\rm relu}(t)$, where $e_{\rm relu}(t)=|t|/\bigl(1+e^{2h(|t|)}\bigr)$ because $h$ is odd, and
  \begin{equation}
    0\le e_{\rm relu}(t)\le e_{\max}.
    \label{eq:app-gelu-relu-gap}
  \end{equation}
  \tool{} computes $e_{\max}\approx0.1703$ numerically.
  For $|t|\le8$, it splits $[0,8]$ into $8192$ equal pieces and bounds $e_{\rm relu}$ on each piece $[t_-,t_+]$ by $t_+/\bigl(1+e^{2h(t_-)}\bigr)$, since $|t|$ increases and $1/\bigl(1+e^{2h(|t|)}\bigr)$ decreases in $|t|$.
  For $|t|\ge8$, it uses $e_{\rm relu}(t)\le8e^{-2h(8)}$, since $|t|e^{-2h(|t|)}$ decreases there.
  This computation runs in floating point with a small outward margin rather than directed rounding.
  A coarser closed-form value follows from $h(|t|)\ge\sqrt{2/\pi}\,|t|$: $e_{\rm relu}(t)\le|t|e^{-2\sqrt{2/\pi}\,|t|}\le\sqrt{\pi/2}/(2e)\approx0.231$.
  Because $\operatorname{ReLU}(t)-at$ is piecewise affine, its extrema on $[\ell,u]$ occur at $\ell$, $u$, or zero when the interval contains zero.
  For this set of points $E$, define
  \begin{equation}
    r_{\rm global}^{\rm lb}=\min_{t\in E}\bigl(\operatorname{ReLU}(t)-at\bigr)-e_{\max},\qquad
    r_{\rm global}^{\rm ub}=\max_{t\in E}\bigl(\operatorname{ReLU}(t)-at\bigr)
    \label{eq:app-gelu-global-bound}
  \end{equation}
  Intersecting the two sound bounds gives
  \begin{equation}
    r^{\rm lb}=\max\{r_{\rm part}^{\rm lb},r_{\rm global}^{\rm lb}\},\qquad
    r^{\rm ub}=\min\{r_{\rm part}^{\rm ub},r_{\rm global}^{\rm ub}\}
    \label{eq:app-gelu-residual-bound}
  \end{equation}
  The implementation uses $K=32$ equal subintervals.

  Set $m=(r^{\rm lb}+r^{\rm ub})/2$ and $d=(r^{\rm ub}-r^{\rm lb})/2$ elementwise.
  Then every $t\in[\ell,u]$ satisfies
  \begin{equation}
    g(t)=at+m+d\epsilon^g,\qquad |\epsilon^g|\le1
    \label{eq:app-gelu-affine-bound}
  \end{equation}
  For a preactivation zonotope $Z_z=\mathcal Z(c_z,G_z,L_z,b_z)$, applying \autoref{eq:app-gelu-affine-bound} elementwise produces
  \begin{equation}
    \begin{aligned}
      Z_g&=\mathcal Z(c_g,G_g,L_g,b_g),
      &c_g&=a\odot c_z+m,
      &(G_g)_j&=a\odot(G_z)_j,\\
      (L_g)_s&=[\,a\odot(L_z)_s\mid\operatorname{diag}(d_s)\,],
      &(b_g)_s&=|a_s|\odot(b_z)_s
    \end{aligned}
    \label{eq:app-gelu-output-zonotope}
  \end{equation}
  The new columns $\operatorname{diag}(d_s)$ are local because GELU acts independently on every token and coordinate.

  \begin{theorem}[Affine GELU soundness]
    \label{thm:gelu-soundness}
    Let $Z_z=\mathcal Z(c_z,G_z,L_z,b_z)$ be a GELU input zonotope, and let $[\ell,u]$ contain every coordinate of $Z_z$.
    The bounds in \autoref{eq:app-gelu-residual-bound} and the output in \autoref{eq:app-gelu-output-zonotope} satisfy
    \begin{equation}
      \{g(z):z\in Z_z\}\subseteq Z_g.
      \label{eq:app-gelu-soundness}
    \end{equation}
    Moreover, the output keeps the input noise symbols: every $z=c_z+\sum_{j=1}^m(G_z)_j\epsilon_j+\sum_{s=1}^S\operatorname{emb}_s\bigl((L_z)_s\epsilon^{\rm loc}_s\bigr)+e\in Z_z$ satisfies $g(z)=c_g+\sum_{j=1}^m(G_g)_j\epsilon_j+\sum_{s=1}^S\operatorname{emb}_s\bigl((L_g)_s(\epsilon^{\rm loc}_s,\epsilon^g_s)\bigr)+a\odot e$ for some $\|\epsilon^g\|_\infty\le1$.
  \end{theorem}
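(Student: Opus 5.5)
The argument reduces to the scalar affine enclosure \autoref{eq:app-gelu-affine-bound} applied coordinatewise, followed by linear bookkeeping of noise symbols. The first step is to establish that, for each coordinate $k$ with interval $[\ell_k,u_k]$ and any slope $a_k$, the residual $q_k(t)=g(t)-a_kt$ satisfies $r_k^{\rm lb}\le q_k(t)\le r_k^{\rm ub}$ for all $t\in[\ell_k,u_k]$. This is done by showing separately that both the partition bounds \autoref{eq:app-gelu-partition-union} and the global bounds \autoref{eq:app-gelu-global-bound} are valid. Their intersection \autoref{eq:app-gelu-residual-bound} is then valid as well, and it is nonempty because it contains $q_k(t)$. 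In the degenerate case $\ell_k=u_k$, any $a_k$ works and the same argument applies.

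For the partition bound, fix a piece $[t_i,t_{i+1}]$. Since $q_k''=g''$, the classical linear-interpolation error estimate gives $|q_k(t)-\pi_i(t)|\le C_i(t-t_i)(t_{i+1}-t)/2\le C_i(t_{i+1}-t_i)^2/8$, where $\pi_i$ is the chord interpolating $q_k$ at $t_i$ and $t_{i+1}$. Because $\pi_i$ lies between $\min\{q(t_i),q(t_{i+1})\}$ and $\max\{q(t_i),q(t_{i+1})\}$, this yields \autoref{eq:app-gelu-partition-bound}. It remains to justify the curvature estimate \autoref{eq:app-gelu-curvature}. Differentiating $g=\tfrac12t(1+\tanh h)$ twice gives
$g''=\operatorname{sech}^2(h)\bigl(h'+\tfrac12t h''-t\tanh(h)h'^2\bigr)$.
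Applying $|\tanh|\le1$ and the triangle inequality then gives the stated bound. The bounds $C_i$ are taken as given enclosures of this right-hand side.

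For the global bound, the first step is to verify the identity $\operatorname{ReLU}(t)-g(t)=|t|/(1+e^{2h(|t|)})$ using the oddness of $h$; this quantity lies in $[0,e_{\max}]$. Then $q_k(t)=(\operatorname{ReLU}(t)-a_kt)-e_{\rm relu}(t)$. The first term is piecewise affine, so its extrema over $[\ell_k,u_k]$ are attained on the set $E$. This gives \autoref{eq:app-gelu-global-bound}. The main obstacle is certifying $e_{\max}$ rigorously. I would treat \autoref{eq:app-gelu-relu-gap} as an assumed numerical fact, and if needed fall back on the closed-form bound $\sqrt{\pi/2}/(2e)$, which is provably valid. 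That fallback can be applied soundly in place of $e_{\max}$ whenever a rigorous guarantee is required.

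Next, set $m_k$ and $d_k$ as in the statement. Each value $q_k(t)$ can then be written as $m_k+d_k\epsilon^g_k$ for some $|\epsilon^g_k|\le1$, which gives \autoref{eq:app-gelu-affine-bound}. Now take any $z\in Z_z$ written in its symbol form. Every coordinate of $z$ lies in $[\ell,u]$, so
$g(z)=a\odot z+m+d\odot\epsilon^g$.
Substituting the expansion of $z$ and using linearity of $a\odot(\cdot)$ gives
$a\odot c_z+m+\sum_j (a\odot (G_z)_j)\epsilon_j+\sum_s\operatorname{emb}_s\bigl(a_s\odot(L_z)_s\epsilon^{\rm loc}_s+\operatorname{diag}(d_s)\epsilon^g_s\bigr)+a\odot e$.
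Here the $d\odot\epsilon^g$ term is split by token because each coordinate belongs to exactly one token, and $\operatorname{diag}(a_s)L=a_s\odot L$ columnwise. This is exactly the claimed symbol-preserving identity. Finally, $|a\odot e|\le|a|\odot b_z=b_g$, so $g(z)\in Z_g$, which establishes \autoref{eq:app-gelu-soundness}.
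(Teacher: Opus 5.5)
Your proposal is correct and follows the paper's proof essentially step for step: the interpolation-error partition bound, the global bound from the ReLU gap, intersecting and centering to get \autoref{eq:app-gelu-affine-bound}, then coordinatewise substitution with $|a\odot e|\le|a|\odot b_z$. Your caveat about certifying $e_{\max}$ by falling back on the closed-form value $\sqrt{\pi/2}/(2e)$ is a sensible addition: it matches the bound the paper's Lean mechanization uses, whereas the written proof simply takes \autoref{eq:app-gelu-relu-gap} as given.
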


  \begin{proof}[Proof of \autoref{thm:gelu-soundness}]
    On each partition interval, the linear-interpolation error is at most $C_i(t_{i+1}-t_i)^2/8$.
    The interpolant lies between its endpoint values, which proves \autoref{eq:app-gelu-partition-bound}.
    The identity $g=\operatorname{ReLU}-e_{\rm relu}$ with \autoref{eq:app-gelu-relu-gap} proves \autoref{eq:app-gelu-global-bound}.
    Their intersection contains $q(t)$, so centering $[r^{\rm lb},r^{\rm ub}]$ gives \autoref{eq:app-gelu-affine-bound}.
    Applying this scalar identity to every coordinate of $z$ gives the symbol-preserving form with the fields in \autoref{eq:app-gelu-output-zonotope}, where $|a\odot e|\le|a|\odot b_z=b_g$.
    Therefore, every $g(z)$ belongs to $Z_g$, which proves \autoref{eq:app-gelu-soundness}.
  \end{proof}

  \begin{remark}[Lean mechanization of affine GELU soundness]
    \label{rem:lean-gelu}
    \autoref{thm:gelu-soundness} is mechanized as \texttt{gelu\_soundness} in \texttt{ZonoGpt/GeluSound.lean}.
    The mechanized bound holds for any $K\ge1$ partition nodes that cover $[\ell,u]$; the implementation's $K=32$ equal subintervals and the coefficient $a$ in~\eqref{eq:app-gelu-slope} serve only to tighten the linear lower and upper bounds.
    The symbol-preserving statement of \autoref{thm:gelu-soundness} is proved as \texttt{gelu\_szono\_symbolic}, which skip connections require because they share noise symbols with earlier generators (\texttt{set\_inclusion\_not\_compositional}).
    The mechanization follows the GELU bound of \autoref{app:gelu-affine} step by step.
    It first verifies the formulas for $g'$ and $g''$, the curvature bound~\eqref{eq:app-gelu-curvature}, and the closed-form ReLU gap bound $e_{\max}=\sqrt{\pi/2}/(2e)$ (\texttt{ZonoGpt/GeluCalc.lean}).
    On each subinterval, it bounds the residual $q(t)=g(t)-at$ by the linear interpolation error in~\eqref{eq:app-gelu-partition-bound}; combining these bounds with the global ReLU bound gives the residual interval $[r^{\rm lb},r^{\rm ub}]$, and hence an affine bound that keeps the noise symbols of the input.
    The proofs proceed step by step, mainly using the tactics \texttt{have}, \texttt{rw}, \texttt{exact}, and \texttt{unfold}, with \texttt{linarith} and \texttt{positivity} closing the inequalities and \texttt{split\_ifs} handling the case analysis in the $\min$ and $\max$ bounds.
  \end{remark}

\subsection{Fused MLP Transform}\label{sec:structured-mlp}
  For input $x$, the MLP block computes $M(x)=x+W_2g(W_1\operatorname{LN}(x)+b_1)+b_2$.
  To construct output bounds for $M$, \tool{} computes the three terms of \autoref{eq:method-fused-construction} for the complete MLP residual: $M(c)$, $J_M(c)H_j$, and $R_M$.
  First, \tool{} computes $M(c)$ by evaluating the MLP residual at $x=c$.
  Second, \tool{} computes $J_M(c)H_j$ along the path $x(t)=c+tH_j$ from \autoref{eq:attention-direction}:
  \begin{equation}
    \begin{aligned}
      &n=\operatorname{LN}(c), \qquad z_c=W_1n+b_1, \qquad
      \dot n=J_{\rm LN}(c)H_j, \qquad \dot z=W_1\dot n \\
      &J_M(c)H_j=H_j+W_2\bigl(g'(z_c)\odot\dot z\bigr)=H_j+J_B(c)H_j
    \end{aligned}
    \label{eq:method-mlp-jvp}
  \end{equation}

  Third, \tool{} computes the nonlinear-error bound $R_M$ by \autoref{thm:mlp-remainder}.
  \begin{theorem}[MLP nonlinear-error bound]
    \label{thm:mlp-remainder}
    Let $M(x)=x+W_2g(W_1\operatorname{LN}(x)+b_1)+b_2$ act tokenwise, let $Z=\{c+\sum_jH_j\epsilon_j:|\epsilon_j|\le1\}$, and set $z_c=W_1\operatorname{LN}(c)+b_1$.
    Suppose a nonnegative tuple $(r_N,R_N,C_g)$ satisfies $|J_{\rm LN}(c)(x-c)|\le r_N$ and $|\operatorname{LN}(x)-\operatorname{LN}(c)-J_{\rm LN}(c)(x-c)|\le R_N$ for every $x\in Z$.
    Suppose $|g''(u)|\le C_g$ elementwise for every $u$ between $z_c-|W_1|(r_N+R_N)$ and $z_c+|W_1|(r_N+R_N)$.
    Then, for every $x\in Z$,
    \begin{equation}
      \underbrace{|M(x)-M(c)-J_M(c)(x-c)|}_{e_M} \le \underbrace{|W_2\operatorname{diag}(g'(z_c))W_1|R_N+\tfrac12|W_2|\left(C_g\odot\bigl(|W_1|(r_N+R_N)\bigr)^2\right)}_{R_M}
      \label{eq:mlp-remainder-bound}
    \end{equation}
  \end{theorem}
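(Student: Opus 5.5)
Fix $x\in Z$ and write $\Delta=x-c$. The plan is to peel the residual apart, writing the MLP error as a linear part plus a curvature part.

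\textbf{Reduction to the branch.} The skip term is exact, since $x-c-\Delta=0$. Likewise $b_2$ cancels. Hence
\[
e_M=W_2\bigl(g(z)-g(z_c)-g'(z_c)\odot W_1J_{\rm LN}(c)\Delta\bigr),
\qquad z=W_1\operatorname{LN}(x)+b_1 .
\]
This uses \autoref{eq:method-mlp-jvp}, which gives $J_M(c)\Delta=\Delta+W_2(g'(z_c)\odot W_1J_{\rm LN}(c)\Delta)$.

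\textbf{Splitting the preactivation change.} Set $\delta=z-z_c=W_1(\ell+e_N)$, where $\ell=J_{\rm LN}(c)\Delta$ and $e_N=\operatorname{LN}(x)-\operatorname{LN}(c)-\ell$. By hypothesis $|\ell|\le r_N$ and $|e_N|\le R_N$. Then $|\delta|\le|W_1|(r_N+R_N)$ elementwise by the triangle inequality. Add and subtract $g'(z_c)\odot\delta$:
\[
g(z)-g(z_c)-g'(z_c)\odot W_1\ell
=g'(z_c)\odot W_1e_N+\bigl(g(z_c+\delta)-g(z_c)-g'(z_c)\odot\delta\bigr).
\]

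\textbf{Bounding the two pieces.} The first piece, after left multiplication by $W_2$, equals $W_2\operatorname{diag}(g'(z_c))W_1e_N$. Its absolute value is at most $|W_2\operatorname{diag}(g'(z_c))W_1|R_N$. The absolute value is taken on the product matrix, so this is valid because $|Me_N|\le|M||e_N|$.

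The second piece is handled coordinatewise by Taylor's theorem with Lagrange remainder, since $g$ is $C^2$. For each $k$ there is $\xi_k$ between $(z_c)_k$ and $(z_c)_k+\delta_k$ with remainder $\tfrac12g''(\xi_k)\delta_k^2$. Because $|\delta_k|\le(|W_1|(r_N+R_N))_k$, the point $\xi_k$ lies in the interval where $|g''|\le C_g$. The remainder is therefore at most $\tfrac12(C_g)_k\bigl((|W_1|(r_N+R_N))_k\bigr)^2$. Multiplying by $W_2$ and applying the triangle inequality gives the $\tfrac12|W_2|(\cdots)$ term.

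Summing the two bounds yields $R_M$.

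\textbf{Main obstacle.} I do not expect the algebra to be the difficulty. The delicate step is the containment argument: the Lagrange point $\xi_k$ must lie in the region where the curvature bound $C_g$ applies. This requires $|\delta_k|$ to be bounded by the same radius used to define that region.

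Two facts make this work. First, $|W_1v|\le|W_1||v|$ for any $v$. Second, the bounds on $\ell$ and $e_N$ hold for every $x\in Z$, not merely at generator vertices.

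I would state explicitly that the segment from $(z_c)_k$ to $z_k$ lies in $[(z_c)_k-\rho_k,(z_c)_k+\rho_k]$ with $\rho=|W_1|(r_N+R_N)$. I would also note that the whole argument is tokenwise, because LN, $W_1$, $g$ and $W_2$ act per token.
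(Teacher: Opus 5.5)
Your proposal is correct and follows essentially the same route as the paper. Both split the preactivation change as $W_1J_{\rm LN}(c)\Delta+W_1e_N$, bound it elementwise by $|W_1|(r_N+R_N)$, apply a second-order Taylor bound to $g$ at $z_c$, and isolate $e_M=W_2\operatorname{diag}(g'(z_c))W_1e_N+W_2e_g$ before taking magnitudes; your explicit check that the Lagrange point lies in the curvature interval makes precise the step the paper uses implicitly when it invokes $C_g$.
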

  \begin{proof}[Proof sketch]
    Taylor's theorem splits the omitted error into a transported LayerNorm error and a GELU curvature term.
    \autoref{app:mlp-remainder} gives the computation and full proof.
  \end{proof}

  Finally, \tool{} computes the MLP output zonotope $Z_M=\mathcal Z(c_M,G_M,L_M,b_M)$ (\autoref{def:method-domain}):
  \begin{equation}
    \label{eq:mlp-zonotope-coefficients}
    c_M=M(c), \quad
    (G_M)_j=J_M(c)G_j,\quad
    (L_M)_s=[\,(J_M(c)\operatorname{emb}_s(L_{s,k}))_s\,]_{k=1}^{q_s}, \quad
    (b_M)_s=(R_M)_s
  \end{equation}
  \autoref{thm:signed-mlp} establishes soundness of the output zonotope in \autoref{eq:mlp-zonotope-coefficients}.
  \begin{theorem}[Fused MLP soundness]
    \label{thm:signed-mlp}
    Let $Z=\mathcal Z(c,G,L,0)$, with $L_s=[L_{s,1},\ldots,L_{s,q_s}]$, and let $M$ be the tokenwise MLP residual above.
    Suppose a nonnegative tuple $(r_N,R_N,C_g)$ satisfies the hypotheses of \autoref{thm:mlp-remainder} for every $x\in Z$.
    Then $Z_M$ defined by \autoref{eq:mlp-zonotope-coefficients} satisfies $M(Z)\subseteq Z_M$.
  \end{theorem}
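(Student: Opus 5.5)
Fix an arbitrary $x\in Z=\mathcal Z(c,G,L,0)$ and write it symbolically as $x=c+\sum_jG_j\epsilon_j+\sum_s\operatorname{emb}_s(L_s\epsilon^{\rm loc}_s)$ with all coefficients in $[-1,1]$. Collecting the shared generators $G_j$ and the embedded local generators $\operatorname{emb}_s(L_{s,k})$ into one list $\{H_j\}$ presents $Z$ in the form $\{c+\sum_jH_j\epsilon_j:|\epsilon_j|\le1\}$. This is exactly the form assumed in \autoref{thm:mlp-remainder}, and its hypotheses hold by assumption. That theorem therefore gives the decomposition
\[
M(x)=M(c)+J_M(c)(x-c)+e_M,\qquad |e_M|\le R_M,
\]
where $e_M=M(x)-M(c)-J_M(c)(x-c)$. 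The plan is to show that each of the three pieces lands in the matching field of $Z_M$ in \autoref{eq:mlp-zonotope-coefficients}.

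The center matches by definition, $c_M=M(c)$. For the linear term, linearity of $J_M(c)$ gives
\[
J_M(c)(x-c)=\sum_j J_M(c)G_j\,\epsilon_j+\sum_s\sum_k J_M(c)\operatorname{emb}_s(L_{s,k})\,(\epsilon^{\rm loc}_s)_k .
\]
The first sum is $\sum_j(G_M)_j\epsilon_j$ with the same symbols $\epsilon_j$. The key step is the local part, where tokenwise structure is used. Since $M$ acts tokenwise (LayerNorm, the affine maps, GELU and the skip all act per token), the Jacobian $J_M(c)$ is block diagonal with blocks $J_M(c)_{ss}$, the Jacobian of the single-token map at $c_s$. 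Hence $J_M(c)\operatorname{emb}_s(L_{s,k})=\operatorname{emb}_s\bigl((J_M(c)\operatorname{emb}_s(L_{s,k}))_s\bigr)$, and all other rows vanish.

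To make this rigorous I would justify the block-diagonal claim directly from the JVP formula \autoref{eq:method-mlp-jvp}. If $H$ is supported on row $s$, then $\dot n=J_{\rm LN}(c)H$ is supported on row $s$, because LayerNorm is per token, as in \autoref{thm:layernorm-soundness}. Then $\dot z=W_1\dot n$, which acts on the feature dimension only. Next $g'(z_c)\odot\dot z$ is elementwise. Finally $H+W_2(\cdot)$ again stays on row $s$. It follows that the local sum equals $\sum_s\operatorname{emb}_s\bigl((L_M)_s\epsilon^{\rm loc}_s\bigr)$ with the same local symbols. This step is the main obstacle. Unlike the attention case in \autoref{thm:contracted-attention}, nothing spills across tokens, which is why no cross-token term appears in $b_M$. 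The argument must make sure the embedding and row extraction are consistent.

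Finally, the remainder $e_M$ satisfies $|e_M|\le R_M=b_M$ coordinatewise, so it is an admissible interval term $e$ in \autoref{def:method-domain}. Combining the three pieces writes $M(x)$ as
\[
c_M+\sum_j(G_M)_j\epsilon_j+\sum_s\operatorname{emb}_s\bigl((L_M)_s\epsilon^{\rm loc}_s\bigr)+e,\qquad \|\epsilon\|_\infty\le1,\ \|\epsilon^{\rm loc}_s\|_\infty\le1,\ |e|\le b_M,
\]
so $M(x)\in Z_M$. Since $x$ was arbitrary, $M(Z)\subseteq Z_M$.
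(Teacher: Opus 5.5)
Your proposal is correct and follows the paper's own argument. You apply \autoref{thm:mlp-remainder} to the flattened generator list, match the center, send each shared generator to $J_M(c)G_j$ with its original symbol, use the tokenwise (block-diagonal) structure of $J_M(c)$ to keep each local image at its source token, and absorb $e_M$ into the interval term $b_M=R_M$. Your row-by-row check of the JVP in \autoref{eq:method-mlp-jvp} simply spells out what the paper states in one line as ``the tokenwise MLP keeps each local image at its source token.''
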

  \begin{proof}[Proof]
    The Jacobian images in \autoref{eq:method-mlp-jvp} retain the input symbols, and the tokenwise MLP keeps each local image at its source token.
    The nonlinear error satisfies $|e_M|\le R_M=b_M$ by \autoref{thm:mlp-remainder}, so the four fields in \autoref{eq:mlp-zonotope-coefficients} contain every output $M(x)$ for $x\in Z$.
  \end{proof}

  \begin{remark}[Lean mechanization of fused MLP soundness]
    \label{rem:lean-signed-mlp}
    \autoref{thm:signed-mlp} is mechanized as \texttt{MlpParams.fused\_mlp\_sound} in \texttt{ZonoGpt/Mlp.lean}, and the variant with LayerNorm pre-composed as \texttt{MlpParams.fused\_mlp\_sound\_ln}.
    The mechanized theorem assumes only the three bounds of \autoref{thm:mlp-remainder} on $r_N$, $R_N$, and $C_g$.
    The mechanization also proves end-to-end soundness of the sequential MLP, covering the full transform used by the implementation (\texttt{MlpParams.Mseq\_image\_subset\_seqMlpOut} in \texttt{ZonoGpt/Sequential.lean}).
    The proof combines two facts.
    First, the JVP $J_M(c)$ is linear and acts on each token separately, so each input generator maps to exactly one output generator with the same noise symbol, and each local generator stays at its own token.
    Second, \autoref{thm:mlp-remainder} bounds the remaining error by $R_M=b_M$.
    Together, they give the symbol-preserving expansion of $M(x)$ (\texttt{MlpParams.fused\_mlp\_symbolic}), from which $M(Z)\subseteq Z_M$ follows.
    Instantiating $r_N$ and $R_N$ with the LayerNorm radii of \autoref{thm:layernorm-soundness}, and $C_g$ with $\sup|g''|$, leaves $\tau>0$ as the only hypothesis (\texttt{MlpParams.fused\_mlp\_sound\_ln\_sup}).
    The proofs mainly use the tactics \texttt{have}, \texttt{rw}, \texttt{simp only}, and \texttt{exact}, with \texttt{linarith} closing the inequalities.
  \end{remark}

  \subsubsection{MLP nonlinear-error computation}\label{app:mlp-remainder}

    \begin{proof}[Proof of \autoref{thm:mlp-remainder}]
      Fix $x\in Z$ and set $v=x-c$.
      Set $z_c=W_1\operatorname{LN}(c)+b_1$.
      The LayerNorm construction gives the following bounds (\autoref{thm:layernorm-soundness}):
      \begin{equation*}
        \operatorname{LN}(x)
        =\operatorname{LN}(c)+J_{\rm LN}(c)v+e_N,
        \qquad |J_{\rm LN}(c)v|\le r_N,
        \qquad |e_N|\le R_N
      \end{equation*}

      The change in the GELU preactivation is
      \begin{equation*}
        \begin{aligned}
          z_{\rm change}
          &=W_1\bigl(\operatorname{LN}(x)-\operatorname{LN}(c)\bigr)\\
          &=\underbrace{W_1J_{\rm LN}(c)v}_{\text{linear LayerNorm change}}
          +\underbrace{W_1e_N}_{\text{LayerNorm nonlinear error}}
        \end{aligned}
      \end{equation*}
      Therefore,
      \begin{equation*}
        |z_{\rm change}|
        \le |W_1|r_N+|W_1|R_N
        =|W_1|(r_N+R_N)
      \end{equation*}

      For tanh-GELU, set
      \begin{equation*}
        s(u)=\sqrt{2/\pi}\,(u+0.044715u^3),\qquad
        g(u)=\tfrac12u\bigl(1+\tanh s(u)\bigr)
      \end{equation*}
      The derivatives of the inner function are
      \begin{equation*}
        s'(u)=\sqrt{2/\pi}\,(1+0.134145u^2),\qquad
        s''(u)=0.26829\sqrt{2/\pi}\,u
      \end{equation*}
      Differentiating GELU once gives
      \begin{equation*}
        g'(u)
        =\tfrac12\bigl(1+\tanh s(u)\bigr)
        +\tfrac12u\operatorname{sech}^2(s(u))s'(u)
      \end{equation*}
      Differentiating again gives
      \begin{equation*}
        g''(u)
        =\operatorname{sech}^2(s(u))
        \left[
        s'(u)+\tfrac12u s''(u)
        -u\tanh(s(u))(s'(u))^2
        \right]
      \end{equation*}
      Since $|\tanh(s(u))|\le1$,
      \begin{equation*}
        |g''(u)|
        \le\operatorname{sech}^2(s(u))
        \left(
        |s'(u)|+|u|\left(|s'(u)|^2+\tfrac12|s''(u)|\right)
        \right)
      \end{equation*}
      The preactivation bound places every coordinate of $z_c+z_{\rm change}$ in
      \begin{equation*}
        [\,z_c-|W_1|(r_N+R_N),\ z_c+|W_1|(r_N+R_N)\,]
      \end{equation*}
      Bounding the preceding expression over each coordinate interval gives
      \begin{equation*}
        C_g
        =\sup_{u\in[\,z_c-|W_1|(r_N+R_N),\,
        z_c+|W_1|(r_N+R_N)\,]}|g''(u)|
      \end{equation*}

      Apply Taylor's theorem to GELU at $z_c$:
      \begin{equation*}
        g(z_c+z_{\rm change})
        =g(z_c)+g'(z_c)\odot z_{\rm change}+e_g
      \end{equation*}
      The curvature assumption on $C_g$ gives
      \begin{equation*}
        |e_g|
        \le\tfrac12C_g\odot|z_{\rm change}|^2
        \le\tfrac12C_g\odot
        \bigl(|W_1|(r_N+R_N)\bigr)^2
      \end{equation*}

      Expanding the complete MLP residual gives
      \begin{equation*}
        \begin{aligned}
          M(x)-M(c)
          &=v+W_2\bigl(g(z_c+z_{\rm change})-g(z_c)\bigr)\\
          &=v+W_2\operatorname{diag}(g'(z_c))W_1J_{\rm LN}(c)v\\
          &\quad+W_2\operatorname{diag}(g'(z_c))W_1e_N+W_2e_g
        \end{aligned}
      \end{equation*}
      The first two terms form $J_M(c)v$.
      Consequently,
      \begin{equation*}
        M(x)-M(c)-J_M(c)v
        =W_2\operatorname{diag}(g'(z_c))W_1e_N+W_2e_g
      \end{equation*}
      Taking magnitudes after composing the linear factors yields
      \begin{equation*}
        \begin{aligned}
          |M(x)-M(c)-J_M(c)v|
          &\le |W_2\operatorname{diag}(g'(z_c))W_1|R_N+|W_2||e_g|\\
          &\le |W_2\operatorname{diag}(g'(z_c))W_1|R_N\\
          &\quad+\tfrac12|W_2|
          \left(C_g\odot\bigl(|W_1|(r_N+R_N)\bigr)^2\right)
        \end{aligned}
      \end{equation*}
      The right-hand side is $R_M$ in \autoref{eq:mlp-remainder-bound}.
    \end{proof}

    \begin{remark}[Lean mechanization of the MLP nonlinear-error bound]
      \label{rem:lean-mlp-remainder}
      The computation above is mechanized as \texttt{MlpParams.mlp\_remainder} in \texttt{ZonoGpt/Mlp.lean} using a second-order Taylor bound (\texttt{taylor2\_bound} in \texttt{ZonoGpt/Calculus.lean}).
      The curvature bound $C_g$ needs to hold only on the interval centered at $z_c$ with radius $|W_1|(r_N+R_N)$.
      The mechanization defines a computable elementwise curvature bound, \texttt{MlpParams.CgSup}, and proves it valid using the global curvature estimate $|g''(u)| \le C_{\sup}$ from \texttt{ZonoGpt/GeluCalc.lean}.
      The mechanization follows the computation step by step: it writes the preactivation change as $W_1$ applied to the first-order LayerNorm change plus the LayerNorm remainder, bounds this change elementwise by $|W_1|(r_N+R_N)$, applies the second-order Taylor bound to $g$ on the segment between $z_c$ and the perturbed preactivation, and combines the two error terms into $R_M$.
      This Taylor bound requires only that $g$ and $g'$ be differentiable.
      The bound holds for every $\tau$ and for $r_N$, $R_N$, and $C_g$ of any sign.
    \end{remark}

\subsection{Fused Attention Transform}
  \label{app:attention-remainder}

  \begin{proof}[Proof of \autoref{thm:attention-radius}]
    Fix $x\in Z$ with $\Delta=x-c$.
    For each intermediate quantity $X$, its total change, linear change, and nonlinear error are
    \begin{equation}
      \delta X=X(x)-X(c),\qquad
      \dot X=J_X(c)\Delta,\qquad
      e_X=\delta X-\dot X
    \end{equation}
    Thus $\dot X$ is the linear change retained by the Jacobian, and $e_X$ is the nonlinear error to be bounded.

    First, \tool{} computes the LayerNorm linear-change and nonlinear-error bounds
    \begin{equation}
      |\dot N|\le r_N,\qquad |e_N|\le R_N
      \label{eq:method-attn-ln-error}
    \end{equation}
    using the construction in \autoref{app:layernorm}.
    The affine $QKV$ projection then gives $(f_Q,f_K,f_V)$ from $|W_{qkv}|r_N$ and $(R_Q,R_K,R_V)$ from $|W_{qkv}|R_N$.
    Hence
    \begin{equation}
      |\delta Q|\le T_Q=f_Q+R_Q,\qquad
      |\delta K|\le T_K=f_K+R_K,\qquad
      |\delta V|\le T_V=f_V+R_V
      \label{eq:method-attn-qkv-radii}
    \end{equation}

    Next, expand the score matrix $U=QK^\top/\sqrt{d_h}$.
    Its linear change and nonlinear error are
    \begin{equation}
      \begin{aligned}
        \dot U&=(\dot QK^\top+Q\dot K^\top)/\sqrt{d_h}\\
        e_U&=(e_QK^\top+Qe_K^\top+\delta Q\delta K^\top)/\sqrt{d_h}
      \end{aligned}
      \label{eq:attention-score-expansion}
    \end{equation}
    The three required bounds follow directly:
    \begin{equation}
      \begin{aligned}
        f_U&=(f_Q|K|^\top+|Q|f_K^\top)/\sqrt{d_h}\\
        R_U^N&=(R_Q|K|^\top+|Q|R_K^\top)/\sqrt{d_h}\\
        R_U^\times&=T_QT_K^\top/\sqrt{d_h}
      \end{aligned}
      \label{eq:method-attn-score-error}
    \end{equation}
    Therefore $|\dot U|\le f_U$, $|e_U|\le R_U^N+R_U^\times$, and $|\delta U|\le f_U+R_U^N+R_U^\times$.

    The score bounds must now pass through softmax.
    For every nonnegative array $r$, define
    \begin{equation}
      [\mathcal J_P(r)]_{ij}
      =P_{ij}\left(r_{ij}+\sum_kP_{ik}r_{ik}\right)
      \label{eq:method-softmax-radius-map}
    \end{equation}
    so $|J_{\rm SM}(U)\Delta U|\le\mathcal J_P(|\Delta U|)$.
    In particular, $f_P=\mathcal J_P(f_U)$ bounds $|\dot P|$.

    To bound the nonlinear softmax error, consider one row
    $p(t)=\operatorname{softmax}(U+t\delta U)$ and set
    $M=\|f_U+R_U^N+R_U^\times\|_\infty$.
    Direct differentiation gives
    \begin{equation}
      p_j''(t)=p_j(t)\left[
      (\delta U_j-\mathbb E_{p(t)}\delta U)^2
      -\operatorname{Var}_{p(t)}(\delta U)
      \right]
      \label{eq:attention-softmax-second-derivative}
    \end{equation}
    The bracket has magnitude at most $4M^2$, and $p_j(t)\le P_je^{2M}$.
    Taylor's integral remainder therefore gives
    \begin{equation}
      |\operatorname{softmax}(U+\delta U)-P-J_{\rm SM}(U)\delta U|
      \le2P\odot e^{2M}M^2
      \label{eq:attention-softmax-taylor-error}
    \end{equation}
    The retained derivative is $J_{\rm SM}(U)\dot U$.
    The omitted score error $e_U=\delta U-\dot U$ contributes at most
    $\mathcal J_P(R_U^N)+\mathcal J_P(R_U^\times)$.
    Thus the Taylor bound is
    \begin{equation}
      R_P^T=\mathcal J_P(R_U^N)+\mathcal J_P(R_U^\times)
      +2P\odot e^{2M}M^2
    \end{equation}
    Softmax outputs lie in $[0,1]$, which independently gives
    $R_P^I=\max\{P,1-P\}+f_P$.
    Consequently,
    \begin{equation}
      R_P=\min\{R_P^T,R_P^I\},\qquad T_P=f_P+R_P
      \label{eq:method-attn-softmax-error}
    \end{equation}
    satisfy $|e_P|\le R_P$ and $|\delta P|\le T_P$.

    Finally, expand the attention-value product after subtracting its center and linear change:
    \begin{equation}
      \begin{aligned}
        e_A
        &=A(x)-A(c)-J_A(c)\Delta\\
        &=W_o\operatorname{concat}\bigl(
        \underbrace{e_PV}_{\text{softmax error}}
        +\underbrace{Pe_V}_{\text{value error}}
        +\underbrace{\delta P\delta V}_{\text{product of both changes}}
        \bigr)
      \end{aligned}
      \label{eq:attention-pv-expansion}
    \end{equation}
    The three terms are bounded by
    \begin{equation}
      |e_PV|\le R_P|V|,\qquad
      |Pe_V|\le PR_V,\qquad
      |\delta P\delta V|\le(f_P+R_P)(f_V+R_V)
    \end{equation}
    because $P\ge0$ and $T_V=f_V+R_V$.
    Concatenating the heads and applying $|W_o|$ gives exactly the bound $R_A$ in \autoref{eq:attention-radius}.
    The residual skip and output bias introduce no nonlinear error because both are affine.
  \end{proof}

  \begin{remark}[Lean mechanization of the attention nonlinear-error bound]
    \label{rem:lean-attention-radius}
    \autoref{thm:attention-radius} is mechanized as \texttt{AttnParams.attention\_radius\_recipe} in \texttt{ZonoGpt/AttentionRecipe.lean}, whose only hypothesis is $\tau>0$.
    The quantity $M$ in~\eqref{eq:attention-softmax-taylor-error} must be taken per row of the score matrix (\texttt{elementwise\_M\_unsound} in \texttt{ZonoGpt/SoftmaxTaylor.lean}).
    The mechanization follows the computation above step by step: it propagates the LayerNorm radii of \autoref{thm:layernorm-soundness} through the projections via $|W|$, through the scores via the exact expansion of $QK^\top$, and through softmax via the radius map $\mathcal J_P$ and a row-wise Taylor bound based on the second derivative~\eqref{eq:attention-softmax-second-derivative}.
    Each step is an exact identity followed by the triangle inequality; the only analytic inputs are the LayerNorm bounds, which need $\tau>0$, and the softmax Taylor bound.
    The computed tuple $(f_P,R_P,f_V,R_V)$ is also proved nonnegative, so it satisfies the nonnegativity condition of \autoref{thm:attention-radius}.
    The proofs mainly use the tactics \texttt{have}, \texttt{rw}, \texttt{exact}, \texttt{simp only}, and \texttt{ring}, with \texttt{linarith} and \texttt{Finset.sum\_le\_sum} closing the inequalities.
  \end{remark}

\subsection{Fused attention soundness}
  \label{app:fused-attn-sound-proof}

  \begin{proof}[Proof of \autoref{thm:contracted-attention}]
    Fix $x\in Z$.
    At the center, $P=P(c)$ and $V=V(c)$.
    For $x\in Z$,
    \begin{equation*}
      P(x)=P+\delta P,\qquad V(x)=V+\delta V
    \end{equation*}
    The four assumptions give
    \begin{equation}
      \delta P=J_P(c)(x-c)+e_P,\qquad
      \delta V=J_V(c)(x-c)+e_V,\qquad
      |e_P|\le R_P,\qquad |e_V|\le R_V
    \end{equation}
    Expanding $(P+\delta P)(V+\delta V)$ and subtracting its center and Jacobian terms gives
    \begin{equation}
      A(x)-A(c)-J_A(c)(x-c)
      =W_o\operatorname{concat}\bigl(e_PV+Pe_V+\delta P\delta V\bigr)
    \end{equation}

    Consider the first term at query token $i$ and head $h$.
    Each row of $e_P$ sums to zero, so \autoref{eq:attention-zero-sum} gives
    \begin{equation}
      \sum_j(e_P)_{ihj}\bar v_{hj}
      =\sum_j(e_P)_{ihj}(\bar v_{hj}-a_{ih})
    \end{equation}
    For every $i,h,j$, the bound $|(e_P)_{ihj}|\le(R_P)_{ihj}$ expresses $(e_P)_{ihj}$ as a coefficient in $[-1,1]$ multiplied by $(R_P)_{ihj}$.
    Hence the generators $(R_P)_{ihj}(\bar v_{hj}-a_{ih})$ in $(L_A)_i$ contain the complete contribution of $W_o\operatorname{concat}(e_PV)$ at token $i$.

    The remaining two terms satisfy
    \begin{equation}
      |Pe_V+\delta P\delta V|
      \le P R_V+(f_P+R_P)(f_V+R_V)
    \end{equation}
    because $P\ge0$, $|\delta P|\le f_P+R_P$, and $|\delta V|\le f_V+R_V$.
    Concatenating the heads and applying $W_o$ bounds their output by
    \begin{equation}
      |W_o|\operatorname{concat}\bigl(PR_V+(f_P+R_P)(f_V+R_V)\bigr)
    \end{equation}
    which is the nonlinear-error term in $b_A$.

    It remains to place the linear term $J_A(c)(x-c)$ in the output zonotope.
    The definition of $Z$ gives
    \begin{equation}
      J_A(c)(x-c)
      =\sum_jJ_A(c)G_j\epsilon_j
      +\sum_{s=1}^{S}\sum_{k=1}^{q_s}J_A(c)\operatorname{emb}_s(L_{s,k})\epsilon^{\mathrm{loc}}_{s,k}
    \end{equation}
    The first sum is represented exactly by $(G_A)_j=J_A(c)G_j$ with the original coefficients $\epsilon_j$.
    For a local input generator at token $s$, $(L_A)_s$ retains the row at token $s$.
    At every token $i\ne s$, the triangle inequality bounds the same generator image by $|(J_A(c)\operatorname{emb}_s(L_{s,k}))_i|$.
    Summing these bounds over $s\ne i$ and $k$ gives the cross-token term in $(b_A)_i$.

    Therefore $c_A=A(c)$ represents the center, $G_A$ and $L_A$ represent the retained terms, and $b_A$ bounds all remaining terms.
    Thus every $A(x)$ belongs to $Z_A$, which proves $A(Z)\subseteq Z_A$.
  \end{proof}

  \begin{remark}[Lean mechanization of fused attention soundness]
    \label{rem:lean-contracted-attention}
    \autoref{thm:contracted-attention} is mechanized as \texttt{AttnParams.fused\_attention\_sound\_recipe} in \texttt{ZonoGpt/AttentionRecipe.lean}, whose only hypothesis is $\tau>0$.
    The argument for~\eqref{eq:attention-zero-sum}, which builds on the exact expansion \texttt{AttnParams.A\_sub\_sub\_dA}, uses the zero-sum property $\sum_j (e_P)_{ihj}=0$ to replace $\bar v_{hj}$ by $\bar v_{hj}-a_{ih}$ for any $a_{ih}$, and writes each $(e_P)_{ihj}$ as $\xi_{ihj}(R_P)_{ihj}$ with $|\xi_{ihj}|\le1$.
    The theorem requires $b=0$ for the input zonotope, because the linear expansion covers only the generator part of the input (\texttt{fused\_attention\_needs\_zero\_interval}).
    The proof separates the algebra from the bounds.
    It first writes the nonlinear error as the algebraic identity $W_o\operatorname{concat}(e_PV+Pe_V+\delta P\,\delta V)$; the zero-sum property then places the softmax term $e_PV$ into local generators, and $b_A$ bounds the remaining terms.
    This yields the symbol-preserving form (\texttt{AttnParams.fused\_attention\_symbolic\_recipe}), from which $A(Z)\subseteq Z_A$ follows for every choice of $a_{ih}$; the weighted median serves only to tighten the bound.
    The proofs mainly use the tactics \texttt{simp only}, \texttt{have}, \texttt{rw}, \texttt{exact}, and \texttt{refine}, with \texttt{funext} for equalities between sequences.
  \end{remark}

\section{Related Work}
  \label{sec:related}

  \paragraph{DNN verification tools.} State-of-the-art DNN verifiers, such as those evaluated in VNN-COMP~\citep{kaulen20256thinternationalverificationneural}, combine techniques to improve scalability.
  \textsc{$\alpha\beta$-CROWN}~\citep{zhou2024scalable,zhou2025clipandverify,zhang2022general,wang2021beta} and \textsc{NeuralSat}~\citep{duong2024harnessing,duong2025neuralsat} split verification problems and refine bounds on the resulting subproblems.
  \textsc{Marabou}~\citep{wu2024marabou,katz2017reluplex,katz2022reluplex,katz2019marabou} encodes each NNV instance as a constraint-solving problem and solves with a customized solver, while \textsc{NNEnum}~\citep{bak2021nnenum} enumerates reachable regions using star sets~\citep{tran2019star}.
  Note that every NNV tool uses some forms of abstract domains to quickly compute bounds of DNNs given input regions and thus verify problems efficiently.

  \paragraph{Abstraction domains.} Existing abstractions form a spectrum of speed-accuracy tradeoffs,
  \textsc{IBP}~\citep{wang2018formal} propagates interval bounds layer by layer, 
  making it the fastest method but also the loosest.
  \textsc{DeepZ}~\citep{singh2018fast} refines this by tracking affine error terms through linear layers, yielding tighter bounds at the cost of higher computational cost.
  \textsc{CROWN}~\citep{zhang2018efficient} and \textsc{DeepPoly}~\citep{singh2019abstract} apply linear relaxation with full back-substitution to the inputs, producing tighter bounds in a single forward-backward pass.
  \textsc{$\alpha$-CROWN}~\citep{xu2020automatic} extends \textsc{CROWN} by optimizing the slopes of the linear relaxations via gradient ascent, while
  \textsc{GCP-CROWN}~\citep{zhang2022general} augments \textsc{$\alpha$-CROWN} with cutting planes on intermediate neurons, pruning infeasible regions, further tightening bounds at the cost of very high computation.
  However, none of these methods scale to very large and deep networks with millions of parameters, \eg, GPT-2 Medium~\citep{radford2019language}.

  \paragraph{Transformer verification.} Existing methods adapt abstract domains to smaller Transformer models or tighten bounds for individual operations.
  \citet{shi2020robustness} verifies small models with simplified LayerNorm.
  \textsc{DeepT} extends zonotopes to Transformer operations, but evaluates deeper models with smaller configurations and restricted perturbations~\citep{bonaert2021fast}.
  \textsc{PBVerifier}~\citep{huang2026parameterized} and \citet{zhang2024galileo} improve attention-product bounds, while \citet{wei2023convex} improves softmax bounds.
  \covenn{}~\citep{duong2025compositional} scales through decomposition, whose interval bounds between components lose precision quickly.

  \paragraph{Our approach.} Prior work remains three limitations:
  (i) \textsc{DeepZ} accumulates generators with network size;
  (ii) existing work bounds softmax and attention products separately, while \covenn{}'s interval bounds lose relations between components; and
  (iii) \citet{shi2020robustness} uses simplified LayerNorm, while \textsc{DeepT} evaluates smaller configurations with restricted perturbations.
  \tool{} addresses these limitations by (i) reducing shared and local generators after each block to keep space independent of network depth (\autoref{sec:method-domain}, \autoref{sec:method-precision}, and \autoref{sec:method-complexity});
  (ii) retaining relations between components in its zonotope and bounding the attention residual as one fused transform (\autoref{sec:method-domain} and \autoref{sec:fused-attn}); and
  (iii) bounding standard LayerNorm and GELU to verify GPT-2 architectures (\autoref{sec:layernorm-fused}, \autoref{sec:gelu-bound}, and \autoref{sec:evaluation}).

\section{Benchmark Details}
  \label{app:benchmark}

  \begin{table}[h]
    \centering
    \caption{Benchmark configuration.}
    \label{tab:benchmark}
    \small
    \begin{tabular}{lll}
      \toprule
      & \textbf{MNIST} & \textbf{SST} \\
      \midrule
      Task & 10-class image classification & Binary sentiment classification \\
      Input sequence & 28 projected image rows & GPT-2 tokens, length 32 or 48 \\
      Classifier input & Mean-pooled tokens & Final token \\
      Training & Cross-entropy & IBP, from scratch \\
      Perturbation & $\ell_\infty$ over all pixels & $\ell_\infty$ over 1 or 3 token embeddings \\
      Perturbation dimension & $28\times28=784$ & $768$ or $2304$ (Small), $1024$ or $3072$ (Medium) \\
      Radii $\varepsilon$ & \multicolumn{2}{c}{$10^{-3},\ 5\!\times\!10^{-4},\ 10^{-4},\ 5\!\times\!10^{-5},\ 10^{-5}$} \\
      Instances & $8\times5\times4\times9=1440$ & $8\times5\times18\times2=1440$ \\
      \bottomrule
    \end{tabular}
  \end{table}

  Both tasks use causal GPT-2 blocks at Small (width 768, 12 heads) and Medium (width 1024, 16 heads) sizes.
  The block counts are 2, 4, 8, and 12 for Small, and 2, 4, 8, and 24 for Medium.
  For MNIST, a learned projection maps each image row to a token. We select four correctly classified test images per model and radius, clip pixel perturbations to $[0,1]$, and write one specification per alternative class.
  For SST, we perturb token embeddings rather than token IDs.
  We train the SST models from scratch with IBP, since verification-aware training improves the verifiability of the resulting networks~\citep{gowal2018effectiveness,xu2024training}.
  We select 18 correctly classified validation sentences per model and radius.
  For each sentence, we generate two specifications, perturbing either one or three token embeddings.
  All sampling uses seed 0.

  In all experiments, \tool{} bounds the combined generator pool by 512 MiB ($2^{29}$ bytes).
  The shared and local tensors store $mSd$ and $qSd$ float32 values, respectively, and therefore require $4Sd(m+q)$ bytes.
  The storage limit gives $m+q\le\lfloor 2^{29}/(4Sd)\rfloor$.
  The implementation retains at least 64 generators, giving $m+q\le\max\{64,\lfloor 2^{29}/(4Sd)\rfloor\}$.

  Let $C=\max\{64,\lfloor 2^{29}/(4Sd)\rfloor\}$ denote this combined limit, and let $\hat m$ and $\hat q$ denote the numbers of shared and local generators before each reduction.
  The values of $C$ are 6,241 and 4,681 for MNIST Small and Medium, and 5,461 and 4,096 for SST Small and Medium, respectively.
  \tool{} computes  $m$ and $q$ dynamically during propagation, while keeping $m+q\le C$ after each reduction.
  In particular, \tool{} retains $m=\min\{\hat m,C\}$ shared generators and $q=\min\{\hat q,C-m\}$ local generators per token.

\end{document}